\renewcommand{\P}{{\mathbf{P}}}
\renewcommand{\S}{{\mathbf{S}}}
\newenvironment{megaalgorithm}[1][H]
  {\renewcommand{\ALG@name}{Framework}
   \begin{algorithm}[#1]%
  }{\end{algorithm}}
\numberwithin{equation}{section}
\newtheorem{setting}[othertheorem]{Setting}
\theoremstyle{definition}
\theoremstyle{remark}
\theoremstyle{definition}
\theoremstyle{definition}
\renewcommand{\P}{{\mathbb{P}}}
\newcommand{\hz}[1]{\textcolor{black}{#1}}
\def \epsilon {\varepsilon}
\title{DP-HyPO: An Adaptive Private Hyperparameter Optimization Framework}
\author{
Hua Wang\thanks{Department of Statistics and Data Science,
				University of Pennsylvania,
				Philadelphia, PA 19104, USA. Email: {\tt wanghua@wharton.upenn.edu}.}
\and Sheng Gao\thanks{Department of Statistics and Data Science,
				University of Pennsylvania,
				Philadelphia, PA 19104, USA. Email: {\tt shenggao@wharton.upenn.edu}.}
\and Huanyu Zhang\thanks{Meta Platforms, Inc., New York, NY 10003, USA. Email: {\tt huanyuzhang@meta.com}.}
\and Weijie J.~Su\thanks{Department of Statistics and Data Science, University of Pennsylvania, Philadelphia, PA 19104, USA. Email: {\tt suw@wharton.upenn.edu}. }
\and Milan Shen\thanks{Meta Platforms, Inc., Menlo Park, CA 94025, USA. Email: {\tt milanshen@gmail.com}.}
}
\begin{document}

\maketitle

\begin{abstract}

{Hyperparameter optimization, also known as hyperparameter tuning, is a widely recognized technique for improving model performance. Regrettably, when training private ML models, many practitioners often overlook the privacy risks associated with hyperparameter optimization, which could potentially expose sensitive information about the underlying dataset.
Currently, the sole existing approach to allow privacy-preserving hyperparameter optimization is to uniformly and randomly select hyperparameters for a number of runs, subsequently reporting the best-performing hyperparameter.
In contrast, in non-private settings, practitioners commonly utilize ``adaptive'' hyperparameter optimization methods such as Gaussian process-based optimization, which select the next candidate based on information gathered from previous outputs.
This substantial contrast between private and non-private hyperparameter optimization underscores a critical concern. In our paper, we introduce DP-HyPO, a pioneering framework for ``adaptive'' private hyperparameter optimization, aiming to bridge the gap between private and non-private hyperparameter optimization. To accomplish this, we provide a comprehensive differential privacy analysis of our framework. Furthermore, we empirically demonstrate the effectiveness of DP-HyPO on a diverse set of real-world datasets.}
\end{abstract}

\section{Introduction}

In recent decades, modern deep learning has demonstrated remarkable advancements in various applications. Nonetheless, numerous training tasks involve the utilization of sensitive information pertaining to individuals, giving rise to substantial concerns regarding privacy~\cite{nasr2019comprehensive, carlini2019secret}. To address these concerns, the concept of differential privacy (DP) was introduced by \cite{dwork2006calibrating, dwork2009complexity}. DP provides a mathematically rigorous framework for quantifying privacy leakage, and it has gained widespread acceptance as the most reliable approach for formally evaluating the privacy guarantees of machine learning algorithms.

When training deep learning models, the most popular method to ensure privacy is noisy (stochastic) gradient descent (DP-SGD) \cite{bassily2014private, song2013stochastic}. DP-SGD typically resembles non-private gradient-based methods; however, it incorporates gradient clipping and noise injection. More specifically, each individual gradient is clipped to ensure a bounded $\ell_2$ norm. Gaussian noise is then added to the average gradient which is utilized to update the model parameters. These adjustments guarantee a bounded sensitivity of each update, thereby enforcing DP through the introduction of additional noise.

In both non-private and private settings, hyperparameter optimization (HPO) plays a crucial role in achieving optimal model performance. Commonly used methods for HPO include grid search (GS), random search (RS), and Bayesian optimization (BO). GS and RS approaches are typically non-adaptive, as they select the best hyperparameter from a predetermined or randomly selected set. While these methods are straightforward to implement, they can be computationally expensive and inefficient when dealing with large search spaces. As the dimensionality of hyperparameters increases, the number of potential trials may grow exponentially.
To address this challenge, adaptive HPO methods such as Bayesian optimization have been introduced~\cite{shahriari2015taking, feurer2019hyperparameter, yu2020hyper}. BO leverages a probabilistic model that maps hyperparameters to objective metrics, striking a balance between exploration and exploitation. BO quickly emerged as the default method for complex HPO tasks, offering improved efficiency and effectiveness compared to non-adaptive methods.

While HPO is a well-studied problem, the integration of a DP constraint into HPO has received little attention. Previous works on DP machine learning often neglect to account for the privacy cost associated with HPO~\cite{abadi2016deep, yu2021large, zhang2021wide, zhang2021wide}. These works either assume that the best parameters are known in advance or rely on a supplementary public dataset that closely resembles the private dataset distribution, which is not feasible in most real-world scenarios.

Only recently have researchers turned to the important concept of honest HPO~\cite{mohapatra2022role}, where the privacy cost during HPO cannot be overlooked. Private HPO poses greater challenges compared to the non-private case for two primary reasons. First, learning with DP-SGD introduces additional hyperparameters (e.g., clipping norm, the noise scale, and stopping time), which hugely adds complexity to the search for optimal hyperparameters. Second, DP-SGD is more sensitive to the selection of hyperparameter combinations, with its performance largely influenced by this choice~\cite{mohapatra2022role, de2022unlocking, panda2022dp}. 

To tackle this challenging question, previous studies such as~\cite{liu2019private, papernot2021hyperparameter} propose running the base algorithm with different hyperparameters a random number of times. They demonstrate that this approach significantly benefits privacy accounting, contrary to the traditional scaling of privacy guarantees with the square root of the number of runs (based on the composition properties from \cite{kairouz2015composition}). While these papers make valuable contributions, their approaches only allow for uniformly random subsampling from a finite and pre-fixed set of candidate hyperparameters at each run. As a result, any advanced technique from HPO literature that requires adaptivity is either prohibited or necessitates a considerable privacy cost (polynomially dependent on the number of runs), creating a substantial gap between non-private and private HPO methods.

Given these considerations, a natural question arises: \textit{Can private hyperparameter optimization be adaptive, without a huge privacy cost?} In this paper, we provide an affirmative answer to this question.

\vspace{-1em}
\subsection{Our Contributions}
\begin{itemize}
    \item \textbf{We introduce the pioneering adaptive private hyperparameter optimization framework, DP-HyPO,} which enables practitioners to adapt to previous runs and focus on potentially superior hyperparameters. DP-HyPO permits the flexible use of non-DP adaptive hyperparameter optimization methods, such as Gaussian process, for enhanced efficiency, while avoiding the substantial privacy costs due to composition. In contrast to the non-adaptive methods presented in~\cite{papernot2021hyperparameter, liu2019private}, our adaptive framework, DP-HyPO, effectively bridges the gap between private and non-private hyperparameter optimization. Importantly, our framework not only encompasses the aforementioned non-adaptive methods as special cases, but also seamlessly integrates virtually all conceivable adaptive methods into the framework.
    
    \item \textbf{We provide sharp DP guarantees for the adaptive private hyperparameter optimization.}     Specifically, when the training procedure is executed multiple times, with each iteration being DP on its own,  outputting the best repetition is DP ensured by the composition property. However, applying composition results in excessively loose privacy guarantees. Prior work in \cite{liu2019private, papernot2021hyperparameter} presents bounds that are either independent of the number of repetitions or depend logarithmically on it. Nevertheless, these results require that the hyperparameter selection for each iteration follows a uniform sampling distribution. In contrast, DP-HyPO allows arbitrary adaptive sampling distributions based on previous runs. Utilizing the R\'enyi DP framework, we offer a strict generalization of those uniform results by providing an accurate characterization of the R\'enyi divergence between the adaptive sampling distributions of neighboring datasets, without any stability assumptions.

    \item \textbf{Empirically, we observe that the Gaussian process-based DP-HyPO algorithm outperforms its uniform counterpart across several practical scenarios.} Generally, practitioners can integrate any non-private adaptive HPO methods into the DP-HyPO framework, opening up a vast range of adaptive private HPO algorithm possibilities. Furthermore, DP-HyPO grants practitioners the flexibility to determine the privacy budget allocation for adaptivity, empowering them to balance between the adaptivity and privacy loss when confronting various hyperparameter optimization challenges.
    
\end{itemize}

\subsection{Our Contributions}
\begin{itemize}
    \item \textbf{We introduce the pioneering adaptive private hyperparameter optimization framework, DP-HyPO,} which enables practitioners to adapt to previous runs and focus on potentially superior hyperparameters. DP-HyPO permits the flexible use of non-DP adaptive hyperparameter optimization methods, such as Gaussian process, for enhanced efficiency, while avoiding the substantial privacy costs due to composition. In contrast to the non-adaptive methods presented in~\cite{papernot2021hyperparameter, liu2019private}, our adaptive framework, DP-HyPO, effectively bridges the gap between private and non-private hyperparameter optimization. Importantly, our framework not only encompasses the aforementioned non-adaptive methods as special cases, but also seamlessly integrates virtually all conceivable adaptive methods into the framework.
    
    \item \textbf{We provide sharp DP guarantees for the adaptive private hyperparameter optimization.}     Specifically, when the training procedure is executed multiple times, with each iteration being DP on its own,  outputting the best repetition is DP ensured by the composition property. However, applying composition results in excessively loose privacy guarantees. Prior work in \cite{liu2019private, papernot2021hyperparameter} presents bounds that are either independent of the number of repetitions or depend logarithmically on it. Nevertheless, these results require that the hyperparameter selection for each iteration follows a uniform sampling distribution. In contrast, DP-HyPO allows arbitrary adaptive sampling distributions based on previous runs. Utilizing the R\'enyi DP framework, we offer a strict generalization of those uniform results by providing an accurate characterization of the R\'enyi divergence between the adaptive sampling distributions of neighboring datasets, without any stability assumptions.

    \item \textbf{Empirically, we observe that the Gaussian process-based DP-HyPO algorithm outperforms its uniform counterpart} across several practical scenarios. Generally, practitioners can integrate any non-private adaptive HPO methods into the DP-HyPO framework, opening up a vast range of adaptive private HPO algorithm possibilities. Furthermore, DP-HyPO grants practitioners the flexibility to determine the privacy budget allocation for adaptivity, empowering them to balance between the adaptivity and privacy loss when confronting various hyperparameter optimization challenges.
    
\end{itemize}
\section{Preliminaries}
\subsection{Differential Privacy and Hyperparameter Optimization}
Differential Privacy is a mathematically rigorous framework for quantifying privacy leakage. A DP algorithm promises that an adversary with perfect information about the entire private dataset in use -- except for a single individual -- would find it hard to distinguish between its presence or absence based on the output of the algorithm~\cite{dwork2006calibrating}. Formally, for $\epsilon > 0 $, and $0 \le \delta < 1$, we consider a (randomized) algorithm $M:\mathcal{Z}^n\to \mathcal{Y}$ that takes as input a dataset.
\begin{definition}[Differential privacy]\label{def:DP}
A randomized algorithm $M$ is $ (\epsilon, \delta)$-DP if for any neighboring dataset $ D, D^{\prime}\in \mathcal{Z}^n$ differing by an arbitrary sample, and for any event $E$, we have $$\mathbb{P}[M(D) \in E] \leqslant \mathrm{e}^{\epsilon} \cdot \mathbb{P}\left[M\left(D^{\prime}\right) \in E\right] + \delta.$$
\end{definition}
Here, $\epsilon$ and $\delta$ are privacy parameters that characterize the privacy guarantee of algorithm $M$. One of the fundamental properties of DP is composition. When multiple DP algorithms are sequentially composed, the resulting algorithm remains private. The total privacy cost of the composition scales approximately with the square root of the number of compositions~\cite{kairouz2015composition}.
We now formalize the problem of hyperparameter optimization with DP guarantees, which builds upon the finite-candidate framework presented in \cite{liu2019private, papernot2021hyperparameter}. Specifically, we consider a set of base DP algorithms ${M_\lambda}: \mathcal{Z}^n \to \mathcal{Y}$, where $\lambda \in \Lambda$ represents a set of hyperparameters of interest, $\mathcal{Z}^n$ is the domain of datasets, and $\mathcal{Y}$ denotes the range of the algorithms. This set $\Lambda$ may be any infinite set, e.g., the cross product of the learning rate $\eta$ and clipping norm $R$ in DP-SGD. We require that the set $\Lambda$ is a measure space with an associated measure $\mu$. 
Common choices for $\mu$ include the counting measure or Lebesgue measure.
We make a mild assumption that $\mu(\Lambda)<\infty.$

Based on the previous research~\cite{papernot2021hyperparameter}, we make two simplifying assumptions. First, we assume that there is a total ordering on the range $\mathcal{Y}$, which allows us to compare two selected models based on their ``performance measure'', denoted by $q$. 
Second, we assume that, for hyperparameter optimization purposes, we output the trained model, the hyperparameter, and the performance measure. Specifically, for any input dataset $D$ and hyperparameter $\lambda$, the return value of $M_\lambda$ is $(x, q) \sim M_\lambda(D)$, where $x$ represents the combination of the model parameters and the hyperparameter $\lambda$, and $q$ is the (noisy) performance measure of the model.

\subsection{Related Work}\label{sec:related_work}
In this section, we focus on related work concerning private HPO, while deferring the discussion on non-private HPO to Appendix~\ref{app:arw}.

Historically, research in DP machine learning has neglected the privacy cost associated with HPO~\cite{abadi2016deep, yu2021large, zhang2021wide}. It is only recently that researchers have begun to consider the honest HPO setting~\cite{mohapatra2022role}, in which the cost is taken into account.

A direct approach to addressing this issue involves composition-based analysis. If each training run of a hyperparameter satisfies DP, the entire HPO procedure also complies with DP through composition across all attempted hyperparameter values. However, the challenge with this method is that the privacy guarantee derived from accounting can be excessively loose, scaling polynomially with the number of runs.

Chaudhuri et al.~\cite{chaudhuri2013stability} were the first to enhance the DP bounds for HPO by introducing additional stability assumptions on the learning algorithms.~\cite{liu2019private} made significant progress in enhancing DP bounds for HPO without relying on any stability properties of the learning algorithms. They proposed a simple procedure where a hyperparameter was randomly selected from a uniform distribution for each training run. This selection process was repeated a random number of times according to a geometric distribution, and the best model obtained from these runs was outputted. 
They showed that this procedure satisfied $(3\varepsilon, 0)$-DP as long as each training run of a hyperparameter was $(\varepsilon, 0)$-DP.
Building upon this,~\cite{papernot2021hyperparameter} extended the procedure to accommodate negative binomial or Poisson distributions for the repeated uniform selection. They also offered more precise R\'enyi DP guarantees for this extended procedure.
Furthermore,~\cite{cohen2022generalized} explored a generalization of the procedure for top-$k$ selection, considering ($\epsilon, \delta$)-DP guarantees.

In a related context,~\cite{mohapatra2022role} explored a setting that appeared superficially similar to ours, as their title mentioned ``adaptivity.'' However, their primary focus was on improving adaptive optimizers such as DP-Adam, which aimed to reduce the necessity of hyperparameter tuning, rather than the adaptive HPO discussed in this paper. Notably, in terms of privacy accounting, their approach only involved composing the privacy cost of each run without proposing any new method.

Another relevant area of research is DP selection, which encompasses well-known methods such as the exponential mechanism~\cite{mcsherry2007mechanism} and the sparse vector technique \cite{dwork2009complexity}, along with subsequent studies (e.g.,~\cite{bun2019private} and~\cite{gopi2020locally}). However, this line of research always assumes the existence of a low-sensitivity score function for each candidate, which is an unrealistic assumption for hyperparameter optimization.

\section{DP-HyPO: General Framework for Private Hyperparameter Optimization}
\label{sec:adaptive_tuning_framework}

The obvious approach to the problem of differentially private hyperparameter optimization would be to run each base algorithm and simply return the best one. 
However, running such an algorithm on large hyperparameter space is not feasible due to the privacy cost growing linearly in the worst case. 

While \cite{liu2019private, papernot2021hyperparameter} have successfully reduced the privacy cost for hyperparameter optimization from linear to constant, there are still two major drawbacks. First, none of the previous methods considers the case when the potential number of hyperparameter candidates is infinite, which is common in most hyperparameter optimization scenarios. In fact, we typically start with a range of hyperparameters that we are interested in, rather than a discrete set of candidates. Furthermore, prior methods are limited to the uniform sampling scheme over the hyperparameter domain $\Lambda$. In practice, this setting is unrealistic since we want to ``adapt'' the selection based on previous results. For instance, one could use Gaussian process to adaptively choose the next hyperparameter for evaluation, based on all the previous outputs. However, no adaptive hyperparameter optimization method has been proposed or analyzed under the DP constraint. In this paper, we bridge this gap by introducing the first DP adaptive hyperparameter optimization framework. 
\subsection{DP-HyPO Framework}

To achieve adaptive hyperparameter optimization with differential privacy, we propose the DP-HyPO framework. Our approach keeps an adaptive sampling distribution $\pi$ at each iteration that reflects accumulated information.

Let $Q(D, \pi)$ be the procedure that randomly draws a hyperparameter $\lambda$ from the distribution\footnote{Here, $\mathcal{D}(\Lambda)$ represents the space of probability densities on $\Lambda$.} $\pi\in \mathcal{D}(\Lambda)$ , and then returns the output from $M_\lambda(D)$. %
We allow the sampling distribution to depend on both the dataset and previous outputs, and we denote as $\pi^{(j)}$ the sampling distribution at the $j$-th iteration on dataset $D$. 
Similarly, the sampling distribution at the $j$-th iteration on the neighborhood dataset $D'$ is denoted as $\pi'^{(j)}$.

We now present the DP-HyPO framework, denoted as $\mathcal{A}(D, \pi^{(0)}, \mathcal{T}, C, c)$, in Framework \ref{alg:adaptive_meta}. The algorithm takes a prior distribution $\pi^{(0)} \in \mathcal{D}(\Lambda)$ as input, which reflects arbitrary prior knowledge about the hyperparameter space. Another input is the distribution $\mathcal{T}$ of the total repetitions of training runs. Importantly, we require it to be a random variable rather than a fixed number to preserve privacy. The last two inputs are $C$ and $c$, which are upper and lower bounds of the density of any posterior sampling distributions. A finite $C$ and a positive $c$ are required to bound the privacy cost of the entire framework. 


\begin{megaalgorithm}
\caption{DP-HyPO $\mathcal{A}(D, \pi^{(0)}, \mathcal{T}, C, c)$}\label{alg:adaptive_meta}
\begin{algorithmic}
\State Initialize $\pi^{(0)}$, a prior distribution over $\Lambda$.
\State Initialize the result set $A = \{\}$
\State Draw $T\sim \mathcal{T}$
\For{$j = 0$ to $T-1$}
    \State $(x, q)\sim Q(D, \pi^{(j)})$
    \State $A = A\cup\{(x, q)\} $  
    \State Update $\pi^{(j+1)}$ based on $A$ according to any adaptive algorithm such that for all $\lambda \in \Lambda$, $$ c \le \frac{\pi^{(j+1)}(\lambda)}{{\pi}^{(0)}(\lambda)}\le C$$
\EndFor
\State Output $(x, q)$ from $A$ with the highest $q$
\end{algorithmic}
\end{megaalgorithm}

Note that we intentionally leave the update rule for $\pi^{(j+1)}$ unspecified in Framework \ref{alg:adaptive_meta} to reflect the fact that any adaptive update rule that leverages information from previous runs can be used. However, for a non-private adaptive HPO update rule, the requirement of bounded adaptive density $c \le \frac{\pi^{(j+1)}(\lambda)}{{\pi}^{(0)}(\lambda)}\le C$ may be easily violated. In \Cref{sec:functional_projection}, We provide a simple projection technique to privatize any non-private update rules. In Section \ref{sec:gp_adaptive_tuning}, we provide an instantiation of DP-HyPO using Gaussian process.



We now state our main privacy results for this framework in terms of R\'enyi Differential Privacy (RDP) \cite{mironov2017renyi}. RDP is a privacy measure that is more \hz{general} than the commonly used $(\epsilon,\delta)$-DP and provides tighter privacy bounds for composition. We defer its exact definition to Definition \ref{def:rdp} in the appendix. 

We note that different distributions of the number of selections (iterations), $\mathcal{T}$, result in very different privacy guarantees. 
Here, we showcase the key idea for deriving the privacy guarantee of DP-HyPO framework by considering a \hz{special} case when $\mathcal{T}$ follows a truncated negative binomial distribution\footnote{Truncated negative binomial distribution is a direct generalization of the geometric distribution. See Appendix \ref{sec:negative-binomial-distribution} for its definition.} $\text{NegBin}(\theta, \gamma)$ (the same assumption as in \cite{papernot2021hyperparameter}).
In fact, as we show in the proof of \Cref{thm:rdp_negbinomial} in Appendix~\ref{app:proof_of_theorem}, the privacy bounds only depend on $\mathcal{T}$ directly through its probability generating function, and therefore one can adapt the proof to obtain the corresponding privacy guarantees for other probability families, for example, the Possion distribution considered in \cite{papernot2021hyperparameter}. From here and on, unless otherwise specified, we will stick with $\mathcal{T} = \text{NegBin}(\theta, \gamma)$ for simplicity. We also assume for simplicity that the prior distribution $\pi^{(0)}$ is a uniform distribution over $\Lambda$. We provide more detailed discussion of handling informed prior other than uniform distribution in Appendix \ref{app:dphypo-general-prior}.

\begin{theorem}\label{thm:rdp_negbinomial}
Suppose that $T$ follows truncated negative Binomial distribution $T\sim \mathrm{NegBin}(\theta, \gamma)$.
Let $\theta \in(-1, \infty)$, $\gamma \in(0,1)$, and $0 < c \le C$. Suppose for all $M_\lambda :\mathcal{Z}^n \rightarrow \mathcal{Y}$ over $\lambda \in \Lambda$, the base algorithms  satisfy $(\alpha, \varepsilon)$-RDP and $(\hat{\alpha}, \hat{\varepsilon})$-RDP for some $\varepsilon, \hat{\varepsilon} \geq 0, \alpha \in(1, \infty)$, and $\hat{\alpha} \in[1, \infty)$. 
Then the DP-HyPO algorithm $\mathcal{A}(D, \pi^{(0)}, \mathrm{NegBin}(\theta, \gamma), C, c)$ satisfies $\left(\alpha, \varepsilon^{\prime}\right)$-RDP where
$$
\varepsilon^{\prime}=\varepsilon+(1+\theta) \cdot\left(1-\frac{1}{\hat{\alpha}}\right)\hat{\varepsilon}+ \left(\frac{\alpha}{\alpha - 1}  + 1 + \theta\right) \log \frac{C}{c} +\frac{(1+\theta) \cdot \log (1 / \gamma)}{\hat{\alpha}}+\frac{\log \mathbb{E}[T]}{\alpha-1}.
$$
\end{theorem}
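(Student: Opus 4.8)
The plan is to reduce the adaptive, random-stopping mechanism to the non-adaptive ``best-of-$T$'' mechanism analyzed in \cite{papernot2021hyperparameter}, and to charge the entire cost of adaptivity to a single bounded correction of order $\log(C/c)$. As a preliminary step I would show that the fixed-prior base draw $Q(D,\pi^{(0)})$ inherits both RDP guarantees of the individual $M_\lambda$. Writing its output density as $f_D(y)=\int_\Lambda \pi^{(0)}(\lambda)\,m_\lambda(y\mid D)\,d\mu(\lambda)$, where $m_\lambda$ is the density of $M_\lambda(D)$, the joint law of $(\lambda,y)$ on $D$ and on $D'$ shares the \emph{common} mixing density $\pi^{(0)}$; hence by quasi-convexity of the R\'enyi divergence (equivalently, post-processing applied to the joint law), $D_\alpha\big(Q(D,\pi^{(0)})\,\|\,Q(D',\pi^{(0)})\big)\le \sup_\lambda D_\alpha(M_\lambda(D)\|M_\lambda(D'))\le\varepsilon$, and likewise at order $\hat\alpha$. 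This is the only place the uniform-over-$\lambda$ RDP hypothesis is consumed; everything downstream is phrased in terms of $Q(D,\pi^{(0)})$.

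The crux is to convert the framework's density constraint $c\le \pi^{(j)}/\pi^{(0)}\le C$ into a per-draw sandwich that is uniform over the data-dependent history. Since $m_\lambda\ge 0$, every conditional single-draw density satisfies $c\,f_D(y)\le Q(D,\pi^{(j)})(y)\le C\,f_D(y)$, and the same on $D'$, regardless of the realized trajectory. Inserting this into the exact i.i.d.\ best-of-$T$ density identity $f_D(y)\,G'(F_D(y))$ — where $G$ is the probability generating function of $T$ and $F_D$ is the CDF of the performance measure $q$ under $Q(D,\pi^{(0)})$ — the winning draw contributes one factor in $[c,C]$ while each ``below the max'' draw contributes a factor in $[c,C]$ inside the survival product. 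The essential observation is that summing over the unknown number of below-draws does \emph{not} yield a hopeless $(C/c)^{T}$: the sum is exactly $\mathbb{E}_T\!\big[T\,(CF_D)^{T-1}\big]=G'(CF_D)$, so the PGF absorbs the per-step factor into its argument. This gives the two-sided bound $c\,f_D(y)\,G'(cF_D(y))\le \mathcal{A}(D)(y)\le C\,f_D(y)\,G'(CF_D(y))$, the key lemma.

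With this in hand I would bound the R\'enyi integrand $\int \mathcal{A}(D)(y)^\alpha\,\mathcal{A}(D')(y)^{1-\alpha}\,dy$ by $C^\alpha c^{1-\alpha}\int (f_D)^\alpha (f_{D'})^{1-\alpha}\,G'(CF_D)^\alpha\,G'(cF_{D'})^{1-\alpha}\,dy$ and run the \cite{papernot2021hyperparameter} computation on the right: the factor $(f_D)^\alpha(f_{D'})^{1-\alpha}$ is controlled by the order-$\alpha$ RDP of $Q(D,\pi^{(0)})$, producing $\varepsilon$, while the survival factors $G'(CF_D)^\alpha G'(cF_{D'})^{1-\alpha}$ are controlled by the order-$\hat\alpha$ RDP through the probability-preservation (CDF-comparison) inequality together with the explicit truncated-negative-binomial form of $G$. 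The dilations $C,c$ inside $G'$, together with the prefactor $C^\alpha c^{1-\alpha}$, are precisely what generate the extra $\big(\tfrac{\alpha}{\alpha-1}+1+\theta\big)\log\tfrac{C}{c}$: the prefactor and the $\tfrac{1}{\alpha-1}$ normalization account for the $\tfrac{\alpha}{\alpha-1}\log\tfrac{C}{c}$ piece attached to the winning draw, while the analysis of $G'(C\,\cdot)/G'(c\,\cdot)$ yields the $(1+\theta)\log\tfrac{C}{c}$ piece attached to the survival structure, mirroring the $(1+\theta)$ coefficients already multiplying $\hat\varepsilon$ and $\log(1/\gamma)$. Setting $C=c$ collapses every new term and recovers the uniform bound of \cite{papernot2021hyperparameter} verbatim, a useful sanity check.

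The main obstacle is exactly the step that keeps the $\log(C/c)$ penalty from scaling with $T$: one must feed the multiplicative sampling bounds into the generating-function representation (so they appear as the argument dilation $G'(CF)$ rather than an external $C^{T}$), and then verify that the negative-binomial $G$ stays finite and well-behaved at the dilated arguments $CF_D\le C$, which requires a convergence control of the form $C(1-\gamma)<1$. Once that is secured, $G'(CF)/G'(cF)$ contributes only an $O(\log(C/c))$ term, and the remaining effort is the routine but lengthy negative-binomial bookkeeping that recombines the stray $\log C$ and $\log(1/c)$ contributions into the stated $\log(C/c)$ coefficients and reproduces the $\gamma$- and $\mathbb{E}[T]$-dependent terms.
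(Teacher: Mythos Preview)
Your reduction strategy is close to the paper's, but the sandwich you propose has a genuine gap. You bound the adaptive single-draw CDF by $cF_D(y)\le P_j(\le y)\le CF_D(y)$ and then feed the dilated argument $CF_D$ into the probability generating function, obtaining $\mathcal{A}(D)(y)\le C\,f_D(y)\,G'(CF_D(y))$. But $CF_D(y)$ can exceed $1$ whenever $F_D(y)>1/C$, and for the truncated negative binomial $G'(x)=\mathrm{const}\cdot(1-(1-\gamma)x)^{-(1+\theta)}$ blows up at $x=1/(1-\gamma)$. You acknowledge this by imposing $C(1-\gamma)<1$, but that is an \emph{additional hypothesis} not present in the theorem; for instance $C=2,\ \gamma=0.1$ is perfectly admissible in the statement yet makes your upper bound infinite. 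So as written your argument proves a strictly weaker result.

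The paper sidesteps this by never dilating the PGF argument. Instead of bounding $P_j(\le y)$ by $CF_D(\le y)$, it bounds it by $P^+(\le y):=\sup_{\pi\in S_{C,c}}\int_\Lambda \mathbb{P}(M_\lambda(D)\le y)\,\pi(\lambda)\,d\lambda$, which is a genuine probability and hence always in $[0,1]$; likewise $P'^-(\le y)$ for the lower bound on $D'$. This keeps $G'$ evaluated inside its natural domain for \emph{every} $C,\gamma$. The entire $C/c$ cost is then pushed into a separate Rényi-divergence comparison: one shows $\mathrm{D}_\alpha(P^+\Vert P'^-)\le \tfrac{\alpha}{\alpha-1}\log\tfrac{C}{c}+\sup_\lambda \mathrm{D}_\alpha(M_\lambda(D)\Vert M_\lambda(D'))$ (and the analogue at order $\hat\alpha$), and it is these two invocations that produce the $\tfrac{\alpha}{\alpha-1}\log\tfrac{C}{c}$ and $(1+\theta)\log\tfrac{C}{c}$ pieces. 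In other words, the $\log(C/c)$ penalty lives in the \emph{measure} being compared, not in the \emph{argument} of $G'$. Replacing your $CF_D,\,cF_{D'}$ by $P^+,\,P'^-$ and proving the divergence lemma above is exactly the missing step; once you do that, the rest of your outline (the PGF manipulation from \cite{papernot2021hyperparameter}) goes through unchanged and without any constraint linking $C$ and $\gamma$.
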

To prove \Cref{thm:rdp_negbinomial}, one of our main technical contributions is Lemma \ref{lem:rdf_bounded_divergence}, which quantifies the R\'enyi divergence of the sampling distribution at each iteration between the neighboring datasets.
We then leverage this crucial result and the probability generating function of $\mathcal{T}$ to bound the R\'enyi divergence in the output of $\mathcal{A}$.  We defer the detailed proof to Appendix~\ref{app:proof_of_theorem}. 

 Next, we present the case with pure DP guarantees. Recall the fact that $(\epsilon,0)$-DP is equivalent to $(\infty, \epsilon)$-RDP~\cite{mironov2017renyi}. When both $\alpha$ and $\hat{\alpha}$ tend towards infinity, we easily obtain the following theorem in terms of $(\varepsilon, 0)$-DP. 
\begin{theorem}
\label{thm:pure_negbinomial}
Suppose that $T$ follows truncated negative Binomial distribution $T\sim \mathrm{NegBin}(\theta, \gamma)$. Let $\theta\in(-1, \infty)$ and $\gamma\in (0, 1)$. If all the base algorithms $M_\lambda$ satisfies $(\varepsilon, 0)$-DP, then the DP-HyPO algorithm $\mathcal{A}(D, \pi^{(0)}, \mathrm{NegBin}(\theta, \gamma), C, c)$  satisfies $\left(\left(2 + \theta\right)\left(\varepsilon + \log\frac{C}{c}\right), 0\right)$-DP.
\end{theorem}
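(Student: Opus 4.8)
The plan is to derive Theorem~\ref{thm:pure_negbinomial} directly from Theorem~\ref{thm:rdp_negbinomial} by passing to the limit $\alpha,\hat{\alpha}\to\infty$, exploiting the standard correspondence between pure DP and R\'enyi DP at the extreme order. First I would invoke the equivalence $(\varepsilon,0)\text{-DP}\Leftrightarrow(\infty,\varepsilon)\text{-RDP}$ cited from~\cite{mironov2017renyi}: since the R\'enyi divergence $D_\alpha$ is nondecreasing in the order $\alpha$, a base algorithm that is $(\varepsilon,0)$-DP is simultaneously $(\alpha,\varepsilon)$-RDP for every finite $\alpha\in(1,\infty)$ and every $\hat{\alpha}\in[1,\infty)$. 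Hence the hypotheses of Theorem~\ref{thm:rdp_negbinomial} hold with the single choice $\hat{\varepsilon}=\varepsilon$, uniformly in the finite orders $\alpha,\hat{\alpha}$.

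Next I would fix a finite order $\alpha$ and send $\hat{\alpha}\to\infty$ in the RDP bound $\varepsilon'(\alpha,\hat{\alpha})$ from Theorem~\ref{thm:rdp_negbinomial}. The two terms carrying a factor $1/\hat{\alpha}$ -- namely the $(1+\theta)\hat{\varepsilon}/\hat{\alpha}$ piece coming from $(1-1/\hat{\alpha})$ and the term $(1+\theta)\log(1/\gamma)/\hat{\alpha}$ -- both vanish, leaving
\[
D_\alpha\big(\mathcal{A}(D)\,\|\,\mathcal{A}(D')\big)\le \varepsilon+(1+\theta)\varepsilon+\Big(\tfrac{\alpha}{\alpha-1}+1+\theta\Big)\log\tfrac{C}{c}+\tfrac{\log\mathbb{E}[T]}{\alpha-1}
\]
for every neighboring pair $D,D'$. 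I would then send $\alpha\to\infty$: the term $\log\mathbb{E}[T]/(\alpha-1)$ vanishes, $\alpha/(\alpha-1)\to 1$, and, using $\varepsilon+(1+\theta)\varepsilon=(2+\theta)\varepsilon$, the right-hand side converges to $(2+\theta)\varepsilon+(2+\theta)\log(C/c)=(2+\theta)\big(\varepsilon+\log(C/c)\big)$.

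The one step requiring genuine care -- and the main (though mild) obstacle -- is the passage on the left-hand side: Theorem~\ref{thm:rdp_negbinomial} only controls finite-order divergences, whereas $(\varepsilon',0)$-DP is equivalent to a bound on the max-divergence $D_\infty$. Here I would use that $D_\alpha$ is nondecreasing in $\alpha$ and converges to $D_\infty$ as $\alpha\to\infty$, so that $D_\infty\big(\mathcal{A}(D)\|\mathcal{A}(D')\big)=\lim_{\alpha\to\infty}D_\alpha\big(\mathcal{A}(D)\|\mathcal{A}(D')\big)\le(2+\theta)\big(\varepsilon+\log(C/c)\big)$. Taking the supremum over all neighboring datasets and re-invoking the $(\infty,\cdot)$-RDP $\Leftrightarrow(\cdot,0)$-DP equivalence then yields exactly the claimed $\big((2+\theta)(\varepsilon+\log\tfrac{C}{c}),0\big)$-DP guarantee; all remaining manipulations are the elementary limit computations above.
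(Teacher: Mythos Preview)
Your proposal is correct and follows exactly the approach the paper indicates: the paper simply remarks that $(\varepsilon,0)$-DP is equivalent to $(\infty,\varepsilon)$-RDP and that sending both $\alpha$ and $\hat{\alpha}$ to infinity in Theorem~\ref{thm:rdp_negbinomial} yields the result. You have in fact been more careful than the paper by explicitly justifying the passage from the finite-order bounds to the max-divergence via monotonicity of $D_\alpha$ in $\alpha$.
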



\Cref{thm:rdp_negbinomial} and \Cref{thm:pure_negbinomial} provide practitioners the freedom to trade off between allocating more DP budget to enhance the base algorithm or to improve adaptivity. In particular, a higher value of $\frac{C}{c}$ signifies greater adaptivity, while a larger $\varepsilon$ improves the performance of base algorithms.


\subsubsection{Uniform  Optimization Method as a Special Case}

We present the uniform hyperparameter optimization method \cite{papernot2021hyperparameter, li2017hyperband} in Algorithm \ref{alg:uniform-hyper-select}, which is a special case of our general DP-HyPO Framework with $C = c = 1$. 
Essentially, this algorithm never updates the sampling distribution $\pi$.

\begin{algorithm}[hbt!]
\caption{Uniform Hyperparameter Optimization $\mathcal{U}(D, \theta, \gamma, \Lambda)$}\label{alg:alg0}
\begin{algorithmic}
\State Let $\pi = \text{Unif}(\{1, ..., |\Lambda|\})$, and $A = \{\}$
\State Draw $T\sim \text{NegBin}(\theta, \gamma)$
\For{$j = 0$ to $T - 1$}
    \State $(x, q)\sim Q(D, \pi)$
    \State $A = A\cup\{(x, q)\} $  
\EndFor
\State Output $(x, q)$ from $A$ with the highest $q$
\end{algorithmic}
\label{alg:uniform-hyper-select}
\end{algorithm}

Our results in \Cref{thm:rdp_negbinomial} and \Cref{thm:pure_negbinomial} generalize the main technical results of \cite{papernot2021hyperparameter, liu2019private}. Specifically, when $C = c = 1$ and $\Lambda$ is a finite discrete set, our \Cref{thm:rdp_negbinomial} precisely recovers Theorem 2 in \cite{papernot2021hyperparameter}. Furthermore, when we set $\theta = 1$, the truncated negative binomial distribution reduces to the geometric distribution, and our \Cref{thm:pure_negbinomial} recovers Theorem 3.2 in \cite{liu2019private} .


\subsection{Practical Recipe to Privatize HPO Algorithms}
\label{sec:functional_projection}


In the DP-HyPO framework, we begin with a prior and adaptively update it based on the accumulated information. However, for privacy purposes, we require the density $\pi^{(j)}$ to be bounded by some constants $c$ and $C$, which is due to the potential privacy leakage when updating $\pi^{(j)}$ based on the history. It is crucial to note that this distribution $\pi^{(j)}$ can be significantly different from the distribution $\pi'^{(j)}$ if we were given a different input dataset $D'$. 
Therefore, we require the probability mass/density function to satisfy $\frac{c}{\mu(\Lambda)} \leq \pi^{(j)}(\lambda) \leq \frac{C}{\mu(\Lambda)}$ for all $\lambda \in \Lambda$ to control the privacy loss due to adaptivity.

This requirement is not automatically satisfied and typically necessitates modifications to current non-private HPO methods. To address this challenge, we propose a general recipe to modify any non-private method. The idea is quite straightforward: throughout the algorithm, we maintain a non-private version of the distribution density ${\pi^{(j)}}$. When sampling from the space $\Lambda$, we perform a projection from $\pi^{(j)}$ to the space consisting of bounded densities. Specifically, we define the space of essentially bounded density functions by $S_{C, c} = \{f\in \Lambda^{\mathbb{R}^+}:\text{ess sup} ~f \le \frac{C}{\mu(\Lambda)}, \text{ess inf} ~f \ge \frac{c}{\mu(\Lambda)}, \int_{\alpha\in \Lambda}f(\alpha) \mathrm{d}\alpha = 1\}$.
For such a space to be non-empty, we require that
$c\leq 1 \leq C,$
where $\mu$ is the measure on $\Lambda$. This condition is well-defined as we assume $\mu(\Lambda)< \infty$.


To privatize $\pi^{(j)}$ at the $j$-th iteration, we project it into the space $S_{C, c}$, by solving the following convex functional programming problem:
\begin{align}
\label{eq:functional_projection_problem}
\begin{split}
    \min_f &~~\|f - {\pi^{(j)}}\|_2,\\
    \text{s.t.}&~~ f\in S_{C, c}.
    \end{split}
\end{align}
Note that this is a convex program since $S_{C, c}$ is convex and closed. We denote the output from this optimization problem by $\mathcal{P}_{S_{C,c}}(\pi^{(j)})$. Theoretically, problem \eqref{eq:functional_projection_problem} allows the hyperparameter space $\Lambda$ to be general measurable space with arbitrary topological structure. However, empirically, practitioners need to discretize $\Lambda$ to some extent to make the convex optimization computationally feasible. Compared to the previous work, our formulation provides the most general characterization of the problem and allows pratitioners to \textit{adaptively} and \textit{iteratively} choose a proper discretization as needed. Framework \ref{alg:adaptive_meta} tolerates a much finer level of discretization than the previous method, as the performance of latter degrades fast when the number of candidates increases. We also provide examples using CVX to solve this problem in Section~\ref{subsec:sim}. In \Cref{app:information projection}, we discuss about its practical implementation, and the connection to information projection.  

\section{Application: DP-HyPO with Gaussian Process}\label{sec:gp_adaptive_tuning}

In this section, we provide an instantiation of DP-HyPO using Gaussian process (GP)~\cite{williams2006gaussian}.
GPs are popular non-parametric Bayesian models frequently employed for hyperparameter optimization. At the meta-level, GPs are trained to generate surrogate models by establishing a probability distribution over the performance measure $q$.
While traditional GP implementations are not private, we leverage the approach introduced in Section~\ref{sec:functional_projection} to design a private version that adheres to the bounded density contraint. 

We provide the algorithmic description in Section~\ref{subsec:agp} and the empircal evaluation in Section~\ref{subsec:sim}.


\subsection{Algorithm Description}
\label{subsec:agp}

The following Algorithm ($\mathcal{AGP}$) is a private version of Gaussian process for hyperparameter tuning.
\begin{algorithm}[hbt!]
\caption{DP-HyPO with Gaussian process $\mathcal{AGP}(D, \theta, \gamma, \tau, \beta, \Lambda, C, c)$}\label{alg:gp_adapt}
\begin{algorithmic}
\State Initialize $\pi^{(0)} = \text{Unif}(\Lambda)$, and $A = \{\}$
\State Draw $T\sim \text{NegBin}(\theta, \gamma)$
\For{$t = 0$ to $T-1$}
    \State Truncate the density of current $\pi^{(t)}$ to be bounded into the range of $[c, C]$ by projecting to $S_{C, c}.$
    \begin{equation*}
        \tilde{\pi}^{(t)} = \mathcal{P}_{S_{C, c}}({\pi}^{(t)}).
    \end{equation*}
    \State Sample $(x, q)\sim Q(D, \tilde{\pi}^{(j)})$, and update $A = A\cup\{(x, q)\} $     
    \State Update mean estimation and variance estimation of the  Gaussian process $\mu_\lambda, \sigma_\lambda^2$, and get the score as $s_\lambda = \mu_\lambda + \tau\sigma_\lambda$.
    \State Update true (untruncated) posterior ${\pi}^{(t + 1)}$ with softmax, by $\pi^{(t+1)}(\lambda) = \frac{\exp(\beta\cdot s_\lambda)}{\int_{\lambda^{\prime}\in\Lambda}\exp(\beta\cdot s_\lambda^{\prime})}$.
\EndFor
\State Output $(x, q)$ from $A$ with the highest $q$
\end{algorithmic}
\end{algorithm}
In Algorithm~\ref{alg:gp_adapt}, we utilize GP to construct a surrogate model that generates probability distributions for the performance measure $q$. By estimating the mean and variance, we assign a ``score'' to each hyperparameter $\lambda$, known as the estimated upper confidence bound (UCB). The weight factor $\tau$ controls the balance between exploration and exploitation, where larger weights prioritize exploration by assigning higher scores to hyperparameters with greater uncertainty.

To transform these scores into a sampling distribution, we apply the softmax function across all hyperparameters, incorporating the parameter $\beta$ as the inverse temperature. A higher value of $\beta$ signifies increased confidence in the learned scores for each hyperparameter.

\subsection{Empirical Evaluations}
\label{subsec:sim}

We now evaluate the performance of our GP-based DP-HyPO (referred to as ``GP'') in various settings. Since DP-HyPO is the first adaptive private hyperparameter optimization method of its kind, we compare it to the special case of Uniform DP-HyPO (Algorithm~\ref{alg:uniform-hyper-select}), referred to as ``Uniform'', as proposed in \cite{liu2019private, papernot2021hyperparameter}.
In this demonstration, we consider two pragmatic privacy configurations: the white-box setting and the black-box setting, contingent on whether adaptive HPO algorithms incur extra privacy cost. In the white-box scenario (Section~\ref{sec:mnist} and~\ref{sec:cifar}), we conduct experiments involving training deep learning models on both the MNIST dataset and CIFAR-10 dataset. Conversely, when considering the black-box setting (Section~\ref{sec:fedlearning}), our attention shifts to a real-world Federated Learning (FL) task from the industry. These scenarios provide meaningful insights into the effectiveness and applicability of our GP-based DP-HyPO approach.

\subsubsection{MNIST Simulation}
\label{sec:mnist}

We begin with the white-box scenario, in which the data curator aims to provide overall protection to the published model. In this context, to accommodate adaptive HPO algorithms, it becomes necessary to reduce the budget allocated to the base algorithm.

In this section, we consider the MNIST dataset, where we employ DP-SGD to train a standard CNN. The base algorithms in this case are different DP-SGD models with varying hyperparameters, and we evaluate each base algorithm based on its accuracy. Our objective is to identify the best hyperparameters that produce the most optimal model within a given total privacy budget.

Specifically, we consider two variable hyperparameters: the learning rate $\eta$ and clipping norm $R$, while keeping the other parameters fixed. We ensure that both the GP algorithm and the Uniform algorithm operate under the same total privacy budget, guaranteeing a fair comparison.


Due to constraints on computational resources, we conduct a semi-real simulation using the MNIST dataset. For both base algorithms (with different noise multipliers), we cache the mean accuracy of $5$ independently trained models for each discretized hyperparameter and treat that as a proxy for the ``actual accuracy'' of the hyperparameter. Each time we sample the accuracy of a hyperparameter, we add a Gaussian noise with a standard deviation of $0.1$ to the cached mean. We evaluate the performance of the output model based on the ``actual accuracy'' corresponding to the selected hyperparameter. Further details on the simulation and parameter configuration can be found in Appendix \ref{app:mnist}.

In the left panel of Figure \ref{fig:MNIST_select_hyperparameter_semi_real_eps_14}, we demonstrated the comparison of performance of the Uniform and GP methods with total privacy budget $\epsilon = 15$\footnote{The $\varepsilon$ values are seemingly very large. Nonetheless, the reported privacy budget encompasses the overall cost of the entire HPO, which is typically overlooked in the existing literature. Given that HPO roughly incurs three times the privacy cost of the base algorithm, an $\varepsilon$ as high as $15$ could be reported as only $5$ in many other works.} and $\delta = 1e-5$. The accuracy reported is the actual accuracy of the output hyperparameter. From the figure, we see that when $T$ is very small $(T < 8)$, GP method is slightly worse than Uniform method as GP spends $\log (C/c)$ budget less than Uniform method for each base algorithm (the cost of adaptivity). However, we see that after a short period of exploration, GP consistently outperform Uniform, mostly due to the power of being adaptive. The superiority of GP is further demonstrated in Table \ref{table:1}, aggregating over geometric distribution.

\begin{figure}
    \centering
    \includegraphics[width=0.32\linewidth]{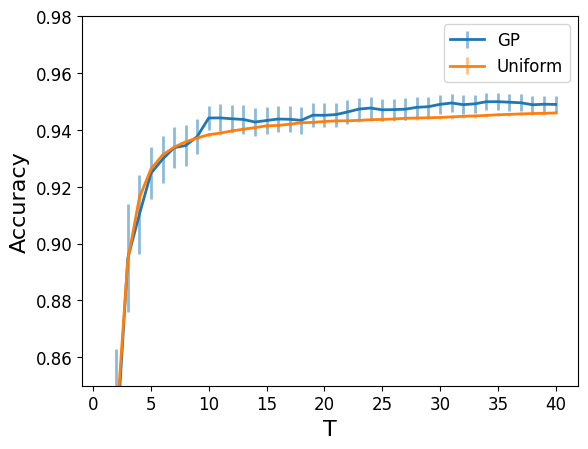}
    \includegraphics[width=0.33\linewidth]{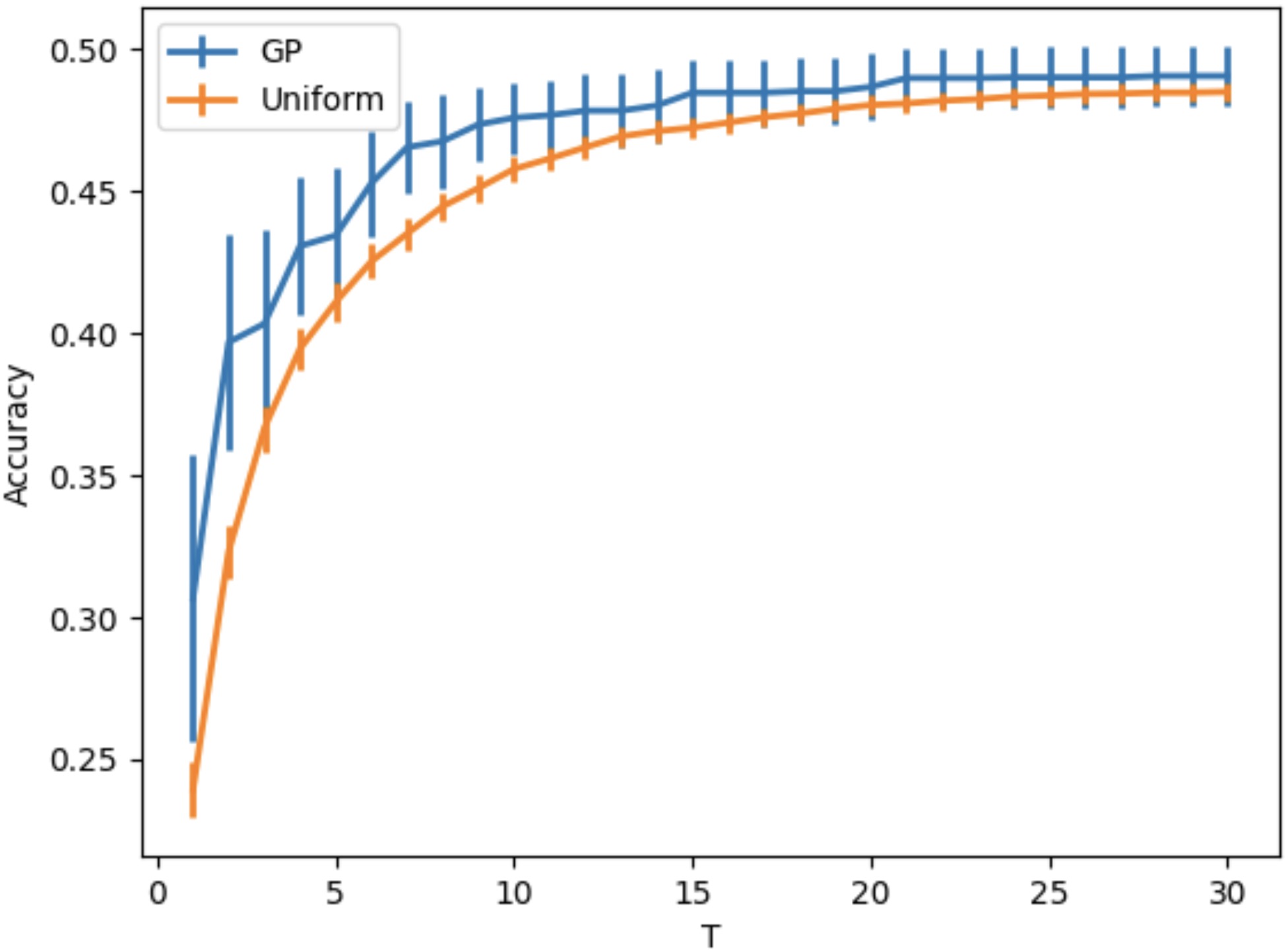}
    \includegraphics[width=0.333\linewidth]{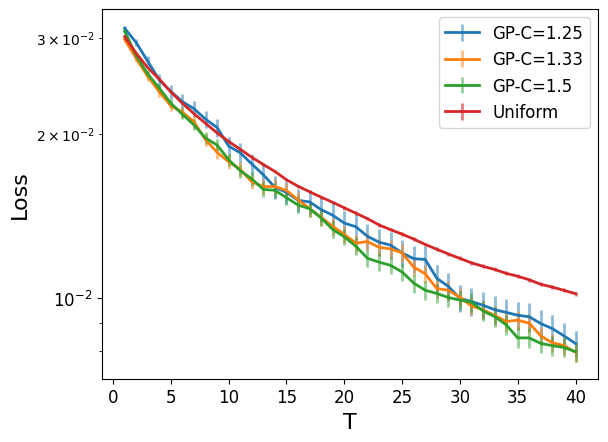}
    \caption{Left: The accuracy of the output hyperparameter in MNIST semi-real simulation, with $\varepsilon=15$, $\delta = 0.00001$. Middle: The accuracy of the output hyperparameter in CIFAR-10, with $\varepsilon=12$, $\delta = 0.00001$. Right: The loss of the output hyperparameter in FL. Error bars stands for $95\%$ confidence. Curves for GP are calculated by averaging $400$ independent runs, and curves for Uniform are calculated by averaging $10000$ independent runs. For a clearer demonstration, we compare the performance for each fixed value of $T$, and recognize that the actual performance is a weighted average across different values of $T$.}
\label{fig:MNIST_select_hyperparameter_semi_real_eps_14}
\end{figure}

\subsubsection{CIFAR-10 Simulation}
\label{sec:cifar}

When examining the results from MNIST, a legitimate critique arises: our DP-Hypo exhibits only marginal superiority over its uniform counterpart, which questions the assertion that adaptivity holds significant value. Our conjecture is that the hyperparameter landscape of MNIST is relatively uncomplicated, which limits the potential benefits of adaptive algorithms.

To test the hypothesis, we conduct experiments on the CIFAR-10 dataset, with a setup closely mirroring the previous experiment: we employ the same CNN model for training, and optimize the same set of hyperparameters, which are the learning rate $\eta$ and clipping norm $R$. The primary difference lies in how we generate the hyperparameter landscape. Given that a single run on CIFAR-10 is considerably more time-consuming than on MNIST, conducting multiple runs for every hyperparameter combination is unfeasible. To address this challenge, we leverage BoTorch~\cite{balandat2020botorch}, an open-sourced library for HPO, to generate the landscape. Since we operate in the white-box setting, where the base algorithms have distinct privacy budgets for the uniform and adaptive scenarios, we execute 50 runs and generate the landscape for each case, including the mean ($\mu_\lambda$) and standard error ($\sigma_\lambda$) of accuracy for each hyperparameter combination $\lambda$. When the algorithm (GP or Uniform) visits a specific $\lambda$, our oracle returns a noisy score $q(\lambda)$ drawn from a normal distribution of $N(\mu_\lambda, \sigma_\lambda)$.
A more detailed description of our landscapes and parameter configuration can be found in Appendix~\ref{app:cifar}.

In the middle of Figure \ref{fig:MNIST_select_hyperparameter_semi_real_eps_14}, we showcase a performance comparison between the Uniform and GP methods with a total privacy budget of $\epsilon = 12$ and $\delta = 1e-5$. Clearly, GP consistently outperforms the Uniform method, with the largest performance gap occurring when the number of runs is around 10.

\subsubsection{Federated Learning}
\label{sec:fedlearning}
In this section, we move to the black-box setting, where the privacy budget allocated to the base algorithm remains fixed, while we allow extra privacy budget for HPO. That being said, the adaptivity can be achieved without compromising the utility of the base algorithm.

We explore another real-world scenario: a Federated Learning (FL) task conducted on a proprietary dataset \footnote{We have to respect confidentiality constraints that limit our ability to provide extensive details about this  dataset.} from industry. Our aim is to determine the optimal learning rates for the central server (using AdaGrad) and the individual users (using SGD). To simulate this scenario, we once again rely on the landscape generated by BoTorch~\cite{balandat2020botorch}, as shown in Figure~\ref{fig:Meta_loss_mean_std_plot} in Appendix~\ref{app:fl}.


Under the assumption that base algorithms are black-box models with fixed privacy costs, we proceed with HPO while varying the degree of adaptivity. 
The experiment results are visualized in the right panel of Figure \ref{fig:MNIST_select_hyperparameter_semi_real_eps_14}, and Table \ref{table:2} presents the aggregated performance data.

We consistently observe that GP outperforms Uniform in the black-box setting.
Furthermore, our findings suggest that allocating a larger privacy budget to the GP method facilitates the acquisition of adaptive information, resulting in improved performance in HPO. This highlights the flexibility of GP in utilizing privacy resources effectively.  

\begin{table}[h]
\centering	
\begin{tabular}{c|c|c|c|c|c|c|c|c}
\hline
Geometric($\gamma$) & 0.001& 0.002& 0.003& 0.005& 0.01& 0.02& 0.025& 0.03   \\ \hline\hline
GP                   &  0.946& 0.948& 0.948& 0.947& 0.943& 0.937& 0.934& 0.932 \\ \hline
Uniform               & 0.943& 0.945& 0.945& 0.944& 0.940& 0.935& 0.932& 0.929 \\ \hline
\end{tabular}
\vspace{0.2em}
\caption{Accuracy of MNIST using Geometric Distribution with various different values of $\gamma$ for Uniform and GP methods. Each number is the mean of  $200$ runs.}
\label{table:1}
\end{table}

\begin{table}[h]
\centering	
\begin{tabular}{c|c|c|c|c|c|c|c|c}
\hline
Geometric($\gamma$) & 0.001& 0.002& 0.003& 0.005& 0.01& 0.02& 0.025& 0.03   \\ \hline\hline
GP (C = 1.25)                   &  0.00853& 0.0088& 0.00906& 0.00958& 0.0108& 0.0129& 0.0138& 0.0146\\ \hline
GP (C = 1.33)                   &  0.00821& 0.00847& 0.00872& 0.00921& 0.0104& 0.0123& 0.0132& 0.0140 \\ \hline
GP (C = 1.5)                   &  0.00822& 0.00848& 0.00872& 0.00920& 0.0103& 0.0123& 0.0131& 0.0130 \\ \hline
Uniform               & 0.0104& 0.0106& 0.0109& 0.0113& 0.0123& 0.0141& 0.0149& 0.0156 \\ \hline
\end{tabular}
\caption{Loss of FL using Geometric Distribution with various different values of $\gamma$ for Uniform and GP methods with different choice of $C$ and $c = 1/C$. Each number is the mean of  $200$ runs.}
\label{table:2}
\end{table}

\section{Conclusion}
In conclusion, this paper presents a novel framework, DP-HyPO. As the first adaptive HPO framework with sharp DP guarantees, DP-HyPO effectively bridges the gap between private and non-private HPO. Our work encompasses the random search method by \cite{liu2019private, papernot2021hyperparameter} as a special case, while also granting practitioners the ability to adaptively learn better sampling distributions based on previous runs. Importantly, DP-HyPO enables the conversion of any non-private adaptive HPO algorithm into a private one. Our framework proves to be a powerful tool for professionals seeking optimal model performance and robust DP guarantees.

The DP-HyPO framework presents two interesting future directions. One prospect involves an alternative HPO specification which is practically more favorable. Considering the extensive literature on HPO, there is a significant potential to improve the empirical performance by leveraging more advanced HPO methods. Secondly, there is an interest in establishing a theoretical utility guarantee for DP-HyPO. By leveraging similar proof methodologies to those in Theorem 3.3 in~\cite{liu2019private}, it is feasible to provide basic utility guarantees for the general DP-HyPO, or for some specific configurations within DP-HyPO.

\section{Acknowledgements}
The authors would like to thank Max Balandat for his thoughtful comments and insights that helped us improve the paper.

\bibliographystyle{plain}
\bibliography{ref.bib}
\newpage
\appendix
\section{Proofs of the technical results}
\label{app:proof_of_theorem}

\subsection{Proof of Main Results}
First, we define R\'enyi divergence as follows. 

\begin{definition}[Rényi Divergences]
Let $P$ and $Q$ be probability distributions on a common space $\Omega$. Assume that $P$ is absolutely continuous with respect to $Q$ - i.e., for all measurable $E \subset \Omega$, if $Q(E)=0$, then $P(E)=0$. Let $P(x)$ and $Q(x)$ denote the densities of $P$ and $Q$ respectively. The KL divergence from $P$ to $Q$ is defined as
$$
\mathrm{D}_1(P \| Q):=\underset{X \leftarrow P}{\mathbb{E}}\left[\log \left(\frac{P(X)}{Q(X)}\right)\right]=\int_{\Omega} P(x) \log \left(\frac{P(x)}{Q(x)}\right) \mathrm{d} x .
$$
The max divergence from $P$ to $Q$ is defined as
$$
\mathrm{D}_{\infty}(P \| Q):=\sup \left\{\log \left(\frac{P(E)}{Q(E)}\right): P(E)>0\right\} .
$$
For $\alpha \in(1, \infty)$, the Rényi divergence from $P$ to $Q$ of order $\alpha$ is defined as
$$
\begin{aligned}
\mathrm{D}_\alpha(P \| Q) & :=\frac{1}{\alpha-1} \log \left(\underset{X \leftarrow P}{\mathbb{E}}\left[\left(\frac{P(X)}{Q(X)}\right)^{\alpha-1}\right]\right) \\
& =\frac{1}{\alpha-1} \log \left(\underset{X \leftarrow Q}{\mathbb{E}}\left[\left(\frac{P(X)}{Q(X)}\right)^\alpha\right]\right) \\
& =\frac{1}{\alpha-1} \log \left(\int_Q P(x)^\alpha Q(x)^{1-\alpha} \mathrm{d} x\right) .
\end{aligned}
$$
\end{definition}
We now present the definition of R\'enyi DP (RDP) in \cite{mironov2017renyi}.
\begin{definition}[R\'enyi Differential Privacy] \label{def:rdp}
A randomized algorithm $M: \mathcal{X}^n \rightarrow \mathcal{Y}$ is $(\alpha, \varepsilon)$-Rényi differentially private if, for all neighbouring pairs of inputs $D, D^{\prime} \in \mathcal{X}^n, \mathrm{D}_\alpha\left(M(x) \| M\left(x^{\prime}\right)\right) \leq \varepsilon$.
\end{definition} 

We define some additional notations for the sake of the proofs. In algorithm \ref{alg:adaptive_meta}, for any $1\le j \le T$, and neighboring dataset $D$ and $D'$, we define the following notations for any $y = (x, q)\in \mathcal{Y}$, the totally ordered range set.
\begin{align*}
    P_j(y) = \P_{\tilde{y}\sim Q(D, \pi^{(j)})}(\tilde{y} = y)\quad &\text{and} \quad P'_j(y) = \P_{\tilde{y}\sim Q(D^\prime, \pi^{\prime(j)})}(\tilde{y} = y)\\
    P_j(\le y) = \P_{\tilde{y}\sim Q(D, \pi^{(j)})}(\tilde{y} \le y)\quad &\text{and} \quad P'_j(\le y) = \P_{\tilde{y}\sim Q(D^\prime, \pi^{\prime(j)})}(\tilde{y} \le y)\\
    P_j(<y) = \P_{\tilde{y}\sim Q(D, \pi^{(j)})}(\tilde{y} < y)\quad &\text{and} \quad P'_j(<y) = \P_{\tilde{y}\sim Q(D^\prime, \pi^{\prime(j)})}(\tilde{y} < y).
\end{align*}
By these definitions, we have $P_j(\le y) = P_j(<y) + P_j(y)$, and $P'_j(\le y) = P'_j(<y) + P'_j(y)$. And additionally, we have
    \begin{align}\label{eq:P(y)_equals_ratio_bound}
        \frac{ P_j(y)}{P'_j(y)}= \frac{\int_{\lambda\in \Lambda}\P(M_\lambda(D) = y)\pi^{(j)}(\lambda)d\lambda }{\int_{\lambda\in \Lambda}\P(M_\lambda(D') = y)\pi^{\prime(j)}(\lambda)d\lambda }&\le \sup_{\lambda\in \Lambda}\frac{\P(M_\lambda(D) = y)\pi^{(j)}(\lambda)}{\P(M_\lambda(D') = y)\pi^{\prime(j)}(\lambda)} \notag\\
        &\le  \frac{C}{c}\cdot \sup_{\lambda\in \Lambda}\frac{\P(M_\lambda(D) = y)}{\P(M_\lambda(D') = y)}.
    \end{align}
    Here, the first inequality follows from the simple property of integration, and the second inequality follows from the fact that $\pi^{(j)}$ has bounded density between $c$ and $C$. Similarly, we have
    \begin{align}\label{eq:P(y)_le_ratio_bound}
        \frac{ P_j(\le y)}{P'_j(\le y)}\le  \frac{C}{c}\cdot \sup_{\lambda\in \Lambda}\frac{\P(M_\lambda(D) \le y)}{\P(M_\lambda(D') \le  y)},
    \end{align}
    and 
    \begin{align}\label{eq:P(y)_<_ratio_bound}
        \frac{ P_j(< y)}{P'_j(< y)}\le  \frac{C}{c}\cdot \sup_{\lambda\in \Lambda}\frac{\P(M_\lambda(D) < y)}{\P(M_\lambda(D') <  y)}.
    \end{align}
    Note that $D$ and $D'$ are neighboring datasets, and $M_\lambda$ satisfies some DP guarantees. So the ratio $\frac{\P(M_\lambda(D) \in E)}{\P(M_\lambda(D')\in E)}$ for any event $E$ can be bounded.

For simplicity, we define the inner product of a distribution $\pi$ with the vector $\bm{M}(D) = (\P(M_\lambda(D) = y): \lambda\in\Lambda)$ as
\begin{equation}\label{eq:inner_product_as_integral}
    \pi\cdot \bm{M}(D) := \int_{\lambda \in \Lambda}\P(M_\lambda(D) = y) \pi(\lambda)d\lambda.
\end{equation}
Now, we define additional notations to bound the probabilities. Recall $S_{C, s}$ is given by $\{f\in \Lambda^{\mathbb{R}^+}: \text{ess sup} ~f \le C, \text{ess inf} ~f \ge c, \int_{\alpha\in \Lambda}f(\alpha) d\alpha = 1.\}$.  It is straightforward to see this is a compact set as it is the intersection of three compact sets. We define
\begin{equation}\label{eq:def_pi+}
    P^+(y) := \sup_{\pi\in S_{C, c}}\int_{\lambda \in \Lambda}\P(M_\lambda(D) = y) \pi^{(j)}(\lambda)d\lambda = \pi^{+} \cdot \bm{M}(D),
\end{equation}
where $\pi^{+}$ is the distribution that achieves the supreme in the compact set $S_{C,c}$. Similarly, we define $P^{\prime-}(y)$ for $D'$ as given by
\begin{equation}\label{eq:def_pi-}
    P^{\prime-}(y) := \inf_{\pi\in S_{C, c}}\int_{\lambda \in \Lambda}\P(M_\lambda(D') = y) \cdot\pi^{\prime(j)}(\lambda)d\lambda = \pi^{\prime-}\cdot \bm{M}.
\end{equation}
Similarly, we can define $P^{\prime+}(y)$ and $P^{-}(y)$ accordingly. From the definition, we know that 
\begin{equation}\label{eq:natural_inequality_for_P_y}
    P^-(y) \le P_j(y)\le P^+(y)\quad \text{and} \quad P^{\prime-}(y)\le P'_j(y) \le P^{\prime+}(y).
\end{equation} 
We also have
\begin{align}
    \frac{P^{+}(y)}{P^{\prime-}(y)} = \frac{\pi^{*}\cdot \bm{M}(D)}{\pi^{\prime-}\cdot \bm{M}(D')} \le \sup_{\lambda} \frac{\P({M_\lambda}(D) = y)}{\P({M_\lambda}(D') = y)} \cdot \frac{C}{c}.
\end{align}

It is similar to define 
\begin{align*}
    P^+(\le y) := \sup_{\pi\in S_{C, c}}\int_{\lambda \in \Lambda}\P(M_\lambda(D) \le y)\quad&\text{and}\quad P^{\prime+}(\le y) := \sup_{\pi\in S_{C, c}}\int_{\lambda \in \Lambda}\P(M_\lambda(D') \le y)\\
    P^-(\le y) := \inf_{\pi\in S_{C, c}}\int_{\lambda \in \Lambda}\P(M_\lambda(D) \le y)\quad&\text{and}\quad P^{\prime-}(\le y) := \inf_{\pi\in S_{C, c}}\int_{\lambda \in \Lambda}\P(M_\lambda(D') \le y)\\
    P^+(< y) := \sup_{\pi\in S_{C, c}}\int_{\lambda \in \Lambda}\P(M_\lambda(D) < y)\quad&\text{and}\quad P^{\prime+}(< y) := \sup_{\pi\in S_{C, c}}\int_{\lambda \in \Lambda}\P(M_\lambda(D') < y)\\
    P^-(< y) := \inf_{\pi\in S_{C, c}}\int_{\lambda \in \Lambda}\P(M_\lambda(D) < y)\quad&\text{and}\quad P^{\prime-}(< y) := \inf_{\pi\in S_{C, c}}\int_{\lambda \in \Lambda}\P(M_\lambda(D') < y).
\end{align*}
Following  the exact same proof, we have
\begin{align}
    P^-(\le y) \le P_j(\le y)\le P^+(\le y) \quad &\text{and}\quad P^{\prime-}(\le y)\le P'_j(\le y) \le P^{\prime+}(\le y)\label{eq:natural_inequality_for_P_le}
    \\
    P^-(< y) \le P_j(< y)\le P^+(<y) \quad &\text{and}\quad P^{\prime-}(< y)\le P'_j(< y) \le P^{\prime+}(< y)\label{eq:natural_inequality_for_P_<}\\
    \frac{P^{+}(\le y)}{P^{\prime-}(\le y)} \le \sup_{\lambda} \frac{\P({M_\lambda}(D) \le y)}{\P({M_\lambda}(D') \le y)} \cdot \frac{C}{c}\quad &\text{and}\quad \frac{P^{+}(< y)}{P^{\prime-}(< y)} \le \sup_{\lambda} \frac{\P({M_\lambda}(D) < y)}{\P({M_\lambda}(D') < y)} \cdot \frac{C}{c}.\label{eq:bounds_P+_P-_for_le_<}
\end{align}
It is also straightforward to verify from the definition that
\begin{align}
    P^+(\le y) = P^+(<y) + P^+(y)\quad &\text{and} \quad P^{\prime+}(\le y) = P^{\prime+}(<y) + P^{\prime+}(y)\label{eq:P+_natural_decomposition}\\
    P^+-(\le y) = P^-(<y) + P^-(y)\quad &\text{and} \quad P^{\prime-}(\le y) = P^{\prime-}(<y) + P^{\prime-}(y).\label{eq:P-_natural_decomposition}
\end{align}

\begin{lemma}\label{lem:ratio_inequality}
    Suppose if $a_\lambda, b_\lambda$ are non-negative and $ c_\lambda, c_\lambda^\prime$ are positive for all $\lambda$. Then we have
    $$
    \frac{\sum_\lambda a_\lambda c_\lambda}
    {\sum_\lambda b_\lambda c_\lambda^\prime} \le \frac{\sum_\lambda a_\lambda }
    {\sum_\lambda b_\lambda } \cdot \sup_{\lambda, \lambda'}\left|\frac{c_\lambda}{c_\lambda^\prime}\right|.
    $$
\end{lemma}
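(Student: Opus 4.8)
The plan is to reduce the double supremum on the right-hand side to the ratio of a single supremum over the single infimum, and then bound the numerator and denominator of the left-hand side separately by termwise monotonicity. First I would set $K := \sup_{\lambda,\lambda'}\left|c_\lambda/c_{\lambda'}'\right|$. Since all $c_\lambda$ and $c_{\lambda'}'$ are strictly positive, the absolute value is superfluous, and—crucially—the index $\lambda$ in the numerator and the index $\lambda'$ in the denominator range independently. Hence the supremum factorizes as $K = (\sup_\lambda c_\lambda)/(\inf_{\lambda'} c_{\lambda'}')$. Write $c^* := \sup_\lambda c_\lambda$ and $c_*' := \inf_{\lambda'} c_{\lambda'}'$, so that $K = c^*/c_*'$.

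Next, using $a_\lambda \ge 0$ together with the termwise bound $c_\lambda \le c^*$, I obtain $\sum_\lambda a_\lambda c_\lambda \le c^* \sum_\lambda a_\lambda$. Symmetrically, using $b_\lambda \ge 0$ together with $c_\lambda' \ge c_*'$, I obtain $\sum_\lambda b_\lambda c_\lambda' \ge c_*' \sum_\lambda b_\lambda$. Dividing the first estimate by the second (the right-hand side of the lemma presupposes $\sum_\lambda b_\lambda > 0$, and then $\sum_\lambda b_\lambda c_\lambda' \ge c_*'\sum_\lambda b_\lambda > 0$, so both denominators are positive) yields
$$
\frac{\sum_\lambda a_\lambda c_\lambda}{\sum_\lambda b_\lambda c_\lambda'} \le \frac{c^*\sum_\lambda a_\lambda}{c_*'\sum_\lambda b_\lambda} = K\cdot\frac{\sum_\lambda a_\lambda}{\sum_\lambda b_\lambda},
$$
which is exactly the claimed inequality.

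The only point that requires any care—and hence the ``main obstacle,'' though it is a mild one—is the factorization $\sup_{\lambda,\lambda'} c_\lambda/c_{\lambda'}' = (\sup_\lambda c_\lambda)/(\inf_{\lambda'} c_{\lambda'}')$, along with the degenerate cases (if $\sup_\lambda c_\lambda = \infty$ the bound is vacuous, and empty or zero sums are read trivially). In the intended application the relevant quantities are bounded densities lying in $[c/\mu(\Lambda), C/\mu(\Lambda)]$, so $c^*$ and $c_*'$ are finite and positive and none of these edge cases arise. I would finally remark that the identical argument applies verbatim when the sums are replaced by integrals against the bounded densities $\pi^{(j)}$, which is precisely how the lemma is invoked in \eqref{eq:P(y)_equals_ratio_bound}–\eqref{eq:bounds_P+_P-_for_le_<}.
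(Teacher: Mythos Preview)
Your proof is correct and follows essentially the same elementary idea as the paper: both arguments amount to the termwise bound $c_\lambda \le c_{\lambda'}'\cdot\sup_{\lambda,\lambda'}(c_\lambda/c_{\lambda'}')$, which holds for every pair $(\lambda,\lambda')$. The paper cross-multiplies and compares coefficients of $a_\lambda b_{\lambda'}$ in the expanded double sum, whereas you make the factorization $\sup_{\lambda,\lambda'} c_\lambda/c_{\lambda'}' = (\sup_\lambda c_\lambda)/(\inf_{\lambda'} c_{\lambda'}')$ explicit and bound numerator and denominator separately; these are two presentations of the same one-line estimate.
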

\begin{proof}[Proof of Lemma \ref{lem:ratio_inequality}]
    This lemma is pretty straight forward by comparing the coefficient for each term in the full expansion. Specifically, we re-write the inequality as 
    \begin{equation}\label{eq:ratio_inequality_expansion}
        \sum_\lambda a_\lambda c_\lambda \sum_{\lambda'} b'_\lambda \le \sum_\lambda a_\lambda \sum_{\lambda'} b'_\lambda  c'_\lambda \cdot  \sup_{\lambda, \lambda'}\left|\frac{c_\lambda}{c_\lambda^\prime}\right|.
    \end{equation}
    For each term $a_\lambda b'_\lambda$, its coefficient  on the left hand side of \eqref{eq:ratio_inequality_expansion} is $c_\lambda$, but its coefficient on the right hand side of \eqref{eq:ratio_inequality_expansion} is $c'_\lambda \cdot  \sup_{\lambda, \lambda'}\left|\frac{c_\lambda}{c_\lambda^\prime}\right|$. Since we always have $c'_\lambda \cdot  \sup_{\lambda, \lambda'}\left|\frac{c_\lambda}{c_\lambda^\prime}\right| \ge c_\lambda$, and $a_\lambda b'_\lambda \ge 0$, we know the inequality \eqref{eq:ratio_inequality_expansion} holds.
\end{proof}
Next, in order to present our results in terms of RDP guarantees, we prove the following lemma.

\begin{lemma}\label{lem:rdf_bounded_divergence}
    The R\'enyi divergence between $P^+$ and $P^-$ is be bounded as follows:
    $$
    \mathrm{D}_\alpha(P^+\|P^{\prime-})  \le \frac{\alpha}{\alpha - 1}\log \frac{C}{c}  + \sup_{\lambda\in \Lambda} \mathrm{D}_\alpha\left(M_\lambda(D) \| M_\lambda(D')\right)
    $$
\end{lemma}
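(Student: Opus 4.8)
The plan is to work directly with the integral form of the Rényi divergence,
$$\mathrm{D}_\alpha(P^+\|P^{\prime-}) = \frac{1}{\alpha-1}\log\int_{\mathcal Y} P^+(y)^\alpha\, P^{\prime-}(y)^{1-\alpha}\,\mathrm{d}y,$$
and to control the two envelopes $P^+$ and $P^{\prime-}$ pointwise by a single, dataset-dependent probability distribution: the uniform mixture $\bar{M}(D,y) := \frac{1}{\mu(\Lambda)}\int_\Lambda \mathbb{P}(M_\lambda(D)=y)\,\mathrm{d}\lambda$ (and analogously $\bar{M}(D',y)$), which is a genuine probability density since integrating over $y$ returns $1$. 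Because every admissible density $\pi\in S_{C,c}$ satisfies $\frac{c}{\mu(\Lambda)}\le \pi(\lambda)\le \frac{C}{\mu(\Lambda)}$ pointwise, taking the supremum (resp.\ infimum) over $\pi$ yields the envelope bounds $P^+(y)\le C\,\bar{M}(D,y)$ and $P^{\prime-}(y)\ge c\,\bar{M}(D',y)$. Since $1-\alpha<0$, these combine into the pointwise estimate $P^+(y)^\alpha P^{\prime-}(y)^{1-\alpha}\le C^\alpha c^{1-\alpha}\,\bar{M}(D,y)^\alpha \bar{M}(D',y)^{1-\alpha}$.

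Integrating over $y$ then isolates the constant and reduces the statement to a divergence between the two mixtures:
$$\int_{\mathcal Y} P^+(y)^\alpha P^{\prime-}(y)^{1-\alpha}\,\mathrm{d}y \le C^\alpha c^{1-\alpha}\exp\big((\alpha-1)\,\mathrm{D}_\alpha(\bar{M}(D)\|\bar{M}(D'))\big).$$
The crux is to pass from this mixture divergence to the per-$\lambda$ divergences. Here I would invoke the quasi-convexity of the Rényi divergence: since $\bar{M}(D)$ and $\bar{M}(D')$ are mixtures of $\{M_\lambda(D)\}$ and $\{M_\lambda(D')\}$ against the \emph{same} mixing measure $\mathrm{d}\lambda/\mu(\Lambda)$, joint quasi-convexity gives $\mathrm{D}_\alpha(\bar{M}(D)\|\bar{M}(D'))\le \sup_{\lambda\in\Lambda}\mathrm{D}_\alpha(M_\lambda(D)\|M_\lambda(D'))$. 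This is the step I expect to be the main obstacle: for $\alpha>1$ the Rényi divergence is not jointly convex, so one must rely on quasi-convexity, and one must extend the two-component statement to an arbitrary continuous mixture by the standard approximation argument.

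Combining the two displays and applying $\frac{1}{\alpha-1}\log(\cdot)$ gives
$$\mathrm{D}_\alpha(P^+\|P^{\prime-}) \le \frac{\alpha}{\alpha-1}\log C - \log c + \sup_{\lambda\in\Lambda}\mathrm{D}_\alpha(M_\lambda(D)\|M_\lambda(D')).$$
Finally, the non-emptiness condition $c\le 1\le C$ for $S_{C,c}$ forces $-\log c\ge 0$, hence $-\log c\le -\frac{\alpha}{\alpha-1}\log c$, so the right-hand side is at most $\frac{\alpha}{\alpha-1}\log\frac{C}{c}+\sup_{\lambda}\mathrm{D}_\alpha(M_\lambda(D)\|M_\lambda(D'))$, which is exactly the claimed bound (in fact this route proves a marginally sharper constant). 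As an alternative to quasi-convexity one could try to start from the pointwise ratio bound $\frac{P^+(y)}{P^{\prime-}(y)}\le \frac{C}{c}\sup_\lambda\frac{\mathbb{P}(M_\lambda(D)=y)}{\mathbb{P}(M_\lambda(D')=y)}$ already established in the excerpt, but that route couples the supremum index inside the integrand with the mixing index and does not collapse cleanly to a single $\sup_\lambda\mathrm{D}_\alpha$; this is why I favor the matched-mixture argument above.
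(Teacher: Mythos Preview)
Your argument is correct and in fact slightly sharper than the paper's, yielding the constant $\frac{\alpha}{\alpha-1}\log C - \log c$ before you relax it to $\frac{\alpha}{\alpha-1}\log\frac{C}{c}$. Both proofs follow the same two-stage skeleton: first reduce from the adaptive envelopes $P^+,P^{\prime-}$ to the fixed uniform mixtures $\bar M(D),\bar M(D')$, paying a multiplicative $(C/c)$-type penalty; then bound $\mathrm{D}_\alpha(\bar M(D)\|\bar M(D'))$ by $\sup_\lambda \mathrm{D}_\alpha(M_\lambda(D)\|M_\lambda(D'))$. For the second stage the paper proves directly that $(u,v)\mapsto u^\alpha v^{1-\alpha}$ is convex and applies Jensen, which is exactly the engine behind the quasi-convexity you cite, so there is no substantive difference there. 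The genuine divergence is in the first stage: the paper introduces an auxiliary ratio inequality (its Lemma~\ref{lem:ratio_inequality}) to peel the $y$-dependent densities $\pi_y^+,\pi_y^{\prime-}$ off the integrand and replace them by the uniform $\pi$, incurring a factor $(C/c)^\alpha$; you instead use the immediate pointwise sandwich $P^+(y)\le C\,\bar M(D,y)$ and $P^{\prime-}(y)\ge c\,\bar M(D',y)$, which is shorter, avoids the auxiliary lemma, and produces the tighter factor $C^\alpha c^{1-\alpha}$. Your route is the more economical of the two.
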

\begin{proof}[Proof of Lemma \ref{lem:rdf_bounded_divergence}]
    We write that 
    
\begin{align}\label{eq:expand_P+P-_RDivergence}
       e^{(\alpha-1) \mathrm{D}_\alpha(P^+\|P^{\prime-})} = 
       \sum_{y\in \mathcal{Y}} P^+(y)^\alpha \cdot P^{\prime -}(y)^{1-\alpha} 
       =\sum_{y\in\mathcal{Y}} \frac{\left(\sum_\lambda \pi^+ (\lambda)\P(M_\lambda(D) = y)\right)^\alpha}{\left(\sum_\lambda \pi^{\prime -} (\lambda)\P(M_\lambda(D') = y)\right)^{\alpha - 1}}
\end{align}
Here, $\pi^+$ and $\pi^{\prime -}$ are defined in \eqref{eq:def_pi+} and \eqref{eq:def_pi-}, so they are essentially $\pi^+_y$ and $\pi^{\prime -}_y$ as they depend on the value of $y$. Therefore, we need to ``remove'' this dependence on $y$ to leverage the RDP guarantees for each base algorithm $M_\lambda$. We accomplish this task by bridging via $\pi$, the uniform density on $\Lambda$ (that is $\pi(\lambda) = \pi(\lambda')$ for any $\lambda, \lambda'\in \Lambda$). Specifically, we define $a_\lambda = \pi(\lambda)\P(M_\lambda(D) = y)$, $b_\lambda =  \pi(\lambda)\P(M_\lambda(D') = y)$, $c_\lambda = \frac{\pi^+_y(\lambda)}{\pi(\lambda)}$, and $c'_\lambda = \frac{ \pi^{\prime -}_y(\lambda)}{\pi(\lambda)}$. We see that 
\begin{equation}\label{eq:c_lambda_c_lambda'_ratio_bound}
    \sup_{\lambda, \lambda'} \left|\frac{c_\lambda}{c'_\lambda}\right| = \sup_{\lambda, \lambda'} \left|\frac{\pi^+_y(\lambda) / \pi(\lambda)}{\pi^{\prime -}_y(\lambda')/\pi(\lambda')}\right| = \sup_{\lambda, \lambda'} \left|\frac{\pi^+_y(\lambda))}{\pi^{\prime -}_y(\lambda')}\right|\le C/c,
\end{equation}

since $\pi$ is the uniform, and $\pi^+_y$ and $\pi^{\prime -}_y$ belongs to $S_{C,c}$. We now apply Lemma \ref{lem:ratio_inequality} with the above notations for each $y$ to \eqref{eq:expand_P+P-_RDivergence}, and we have
\begin{align*}
        &\sum_{y\in\mathcal{Y}} \frac{\left(\sum_\lambda \pi^+ (\lambda)\P(M_\lambda(D) = y)\right)^\alpha}{\left(\sum_\lambda \pi^{\prime -} (\lambda)\P(M_\lambda(D') = y)\right)^{\alpha - 1}}\\
        =&\sum_{y\in\mathcal{Y}} \frac{\left(\sum_\lambda \pi (\lambda)\P(M_\lambda(D) = y)\cdot \frac{\pi^+ (\lambda)}{\pi (\lambda)}\right)^{\alpha - 1}\left(\sum_\lambda \pi (\lambda)\P(M_\lambda(D) = y)\cdot \frac{\pi^+ (\lambda)}{\pi (\lambda)}\right)}{\left(\sum_\lambda \pi (\lambda)\P(M_\lambda(D') = y)\cdot \frac{\pi^{\prime -} (\lambda)}{\pi(\lambda)}\right)^{\alpha - 1}}\\
        =&\sum_{y\in\mathcal{Y}} \frac{(\sum_\lambda a_\lambda \cdot c_\lambda)^{\alpha - 1}\left(\sum_\lambda \pi (\lambda)\P(M_\lambda(D) = y)\cdot \frac{\pi^+ (\lambda)}{\pi (\lambda)}\right)}{(\sum_\lambda b_\lambda \cdot c'_\lambda)^{\alpha - 1}}\\
        \le&\sum_{y\in\mathcal{Y}}\sup_{\lambda, \lambda'} \left|\frac{c_\lambda}{c'_\lambda}\right|^{\alpha - 1}\frac{(\sum_\lambda a_\lambda )^{\alpha - 1}\left(\sum_\lambda \pi (\lambda)\P(M_\lambda(D) = y)\cdot \frac{\pi^+ (\lambda)}{\pi (\lambda)}\right)}{(\sum_\lambda b_\lambda )^{\alpha - 1}}\\
        =&\sum_{y\in\mathcal{Y}}\sup_{\lambda, \lambda'} \left|\frac{c_\lambda}{c'_\lambda}\right|^{\alpha - 1}\frac{(\sum_\lambda a_\lambda )^{\alpha - 1}\left(\sum_\lambda a_\lambda\cdot c_\lambda\right)}{(\sum_\lambda b_\lambda )^{\alpha - 1}}\\
        \le&\sum_{y\in\mathcal{Y}}\sup_{\lambda, \lambda'} \left|\frac{c_\lambda}{c'_\lambda}\right|^{\alpha - 1}\frac{(\sum_\lambda a_\lambda )^{\alpha - 1}\left(\sum_\lambda a_\lambda\right) \cdot \sup_\lambda c_\lambda}{(\sum_\lambda b_\lambda )^{\alpha - 1}}\\
        \le&\sum_{y\in\mathcal{Y}}\left(\frac{C}{c}\right)^{\alpha - 1}\frac{(\sum_\lambda a_\lambda )^{\alpha - 1}\left(\sum_\lambda a_\lambda\right) \cdot\left(\frac{C}{c}\right) }{(\sum_\lambda b_\lambda )^{\alpha - 1}}\\
       =&\sum_{y\in\mathcal{Y}} \left(\frac{C}{c}\right)^{\alpha }\cdot \frac{\left(\sum_\lambda \pi(\lambda) \P(M_\lambda(D) = y)\right)^\alpha}{\left(\sum_\lambda \pi (\lambda)\P(M_\lambda(D') = y) \right)^{\alpha - 1}}\\
\end{align*}
The first inequality is due to Lemma \ref{lem:ratio_inequality}, the second inequality is because $a_\lambda$ are non-negative, and the last inequality is because of \eqref{eq:c_lambda_c_lambda'_ratio_bound} and the fact that both $\pi^{+} (\lambda)$ and $\pi(\lambda)$ are defined in $\S_{C,c}$, and thus their ratio is upper bounded by $\frac{C}{c}$ for any $\lambda$.

Now we only need to prove that for any fixed distribution $\pi$ that doesn't depend on value $y$, we have 
\begin{align}\label{eq:convexity_of_rdp}
    \sum_{y\in\mathcal{Y}} \frac{\left(\sum_\lambda \pi (\lambda) \P(M_\lambda(D) = y)\right)^\alpha}{\left(\sum_\lambda \pi(\lambda)\P(M_\lambda(D') = y) \right)^{\alpha - 1}}\le \sup_{\lambda\in \Lambda} e^{(\alpha-1) \mathrm{D}_\alpha\left(M_\lambda(D) \| M_\lambda(D')\right)}.
\end{align}
With this result, we immediately know the result holds for uniform distribution $\pi$ as a special case. To prove this result, we first observe that the function $f(u, v) = u^\alpha v^{1-\alpha}$ is a convex function. This is because the Hessian of $f$ is 
$$
\begin{pmatrix}
\alpha(\alpha - 1) u^{\alpha - 2} v^{1-\alpha}& -\alpha(\alpha - 1) u^{\alpha - 1} v^{-\alpha} \\
-\alpha(\alpha - 1) u^{\alpha - 1} v^{-\alpha } & \alpha (\alpha - 1) u^{\alpha} v^{-\alpha - 1}
\end{pmatrix},
$$
which is easy to see to be positive semi-definite.
And now, consider any distribution $\pi$, denote $u(\lambda) = \P(M_\lambda(D) = y)$ and $v(\lambda) = \P(M_\lambda(D') = y)$ by Jensen's inequality, we have
$$
f(\sum_\lambda \pi(\lambda)u(\lambda), \sum_\lambda \pi(\lambda)v(\lambda)) \le \sum_\lambda \pi(\lambda) f(u(\lambda), v(\lambda)). 
$$
By adding the summation over $y$ on both side of the above inequality, we have
\begin{align*}
     \sum_{y\in\mathcal{Y}} \frac{\left(\sum_\lambda \pi (\lambda) \P(M_\lambda(D) = y)\right)^\alpha}{\left(\sum_\lambda \pi(\lambda)\P(M_\lambda(D') = y) \right)^{\alpha - 1}} 
     &\le \sum_{y\in\mathcal{Y}} \sum_{\lambda} \pi (\lambda) \frac{\P(M_\lambda(D) = y)^\alpha}{\P(M_\lambda(D') = y)^{\alpha - 1}} \\
     &= \sum_{\lambda}\sum_{y\in\mathcal{Y}} \pi (\lambda)  \frac{\P(M_\lambda(D) = y)^\alpha}{\P(M_\lambda(D') = y)^{\alpha - 1}} \\
    & \le 
 \sup_{\lambda} \sum_{y\in\mathcal{Y}} \frac{\P(M_\lambda(D) = y)^\alpha}{\P(M_\lambda(D') = y)^{\alpha - 1}} .
\end{align*}
The first equality is due to Fubini's theorem. And the second inequality is straight forward as one observe $\pi(\lambda)$ only depends on $\lambda$. This concludes the proof as we know that 
\begin{align*}
    e^{(\alpha-1) \mathrm{D}_\alpha(P^+\|P^{\prime-})} &\le \left(\frac{C}{c}\right)^\alpha \sup_{\lambda} \sum_{y\in\mathcal{Y}} \frac{\P(M_\lambda(D) = y)^\alpha}{\P(M_\lambda(D') = y)^{\alpha - 1}} \\
    &= \left(\frac{C}{c}\right)^\alpha \sup_{\lambda} e^{(\alpha-1) \mathrm{D}_\alpha(M_\lambda(D)\|M_\lambda(D')} \\
\end{align*}
or equivalently,
$$
\mathrm{D}_\alpha(P^+\|P^{\prime-})  \le \frac{\alpha}{\alpha - 1} \log \frac{C}{c}  + \sup_{\lambda\in \Lambda} \mathrm{D}_\alpha\left(M_\lambda(D) \| M_\lambda(D')\right).
$$
\end{proof}

We now present our crucial technical lemma for adaptive hyperparameter tuing with any distribution on the number of repetitions $T$. This is a generalization from \cite{papernot2021hyperparameter}. 
\begin{lemma}\label{lem:crucial_lemma}
Fix $\alpha > 1$. Let $T$ be a random variable supported on $\mathbb{N}_{\ge0}$. Let $f:[0, 1]\to \mathbb{R}$ be the probability generating function of $K$, that is, $f(x) = \sum_{k = 0}^\infty \P[T = k] x^k $. 

Let $M_\lambda$ and $M_\lambda^{\prime}$ be the base algorithm for $\lambda \in \Lambda$ on $\mathcal{Y}$ on $D$ and $D'$ respectively. Define $A_1 := \mathcal{A}(D, \pi^{(0)}, \mathcal{T}, C, c)$, and $A_2 := \mathcal{A}(D', \pi^{(0)}, \mathcal{T}, C, c)$. Then
$$
\mathrm{D}_\alpha\left(A_1 \| A_2\right) \leq \sup_\lambda \mathrm{D}_\alpha\left(M_\lambda \| M_\lambda^{\prime}\right)+\frac{\alpha}{\alpha - 1}\log\frac{C}{c} + \frac{1}{\alpha-1} \log \left(f^{\prime}(q)^\alpha \cdot f^{\prime}\left(q^{\prime}\right)^{1-\alpha}\right),
$$
where applying the same postprocessing to the bounding probabilities $P^+$ and $P^{\prime-}$ gives probabilities $q$ and $q^{\prime}$ respectively. This means that, there exist a function set $g: \mathcal{Y} \rightarrow[0,1]$ such that $q=\underset{X \leftarrow P^+}{\mathbb{E}}[g(X)]$ and $q^{\prime}=\underset{X^{\prime} \leftarrow P^{\prime-}}{\mathbb{E}}\left[g\left(X^{\prime}\right)\right]$.
\end{lemma}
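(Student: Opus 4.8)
The plan is to peel the best-of-$T$ selection off of the base mechanisms, reducing $\mathrm{D}_\alpha(A_1\|A_2)$ to the single-draw divergence $\mathrm{D}_\alpha(P^+\|P^{\prime-})$ already controlled by Lemma~\ref{lem:rdf_bounded_divergence}, at the cost of one extra factor built from the probability generating function $f$ of $T$. The whole difficulty is that the realized output is the maximum of $T$ \emph{adaptive} draws rather than of i.i.d.\ samples, so the history-dependent laws $\pi^{(j)}$ must be dominated by the static envelopes $P^+$ on $D$ and $P^{\prime-}$ on $D'$. Fixing an output value $y$ and writing $B_k(\le y)$ for the probability that the first $k$ adaptive draws on $D$ are all $\le y$, I would first establish the pointwise envelope
\[
\P[A_1=y]\;=\;\sum_{k}\P[T=k]\bigl(B_k(\le y)-B_k(< y)\bigr)\;\le\; f\bigl(P^+(\le y)\bigr)-f\bigl(P^+(< y)\bigr),
\]
together with the matching lower bound $\P[A_2=y]\ge f(P^{\prime-}(\le y))-f(P^{\prime-}(< y))$ on $D'$.

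The key internal step is the inequality $B_k(\le y)-B_k(< y)\le P^+(\le y)^k-P^+(< y)^k$. I would prove it by decomposing the event $\{\text{all draws}\le y,\ \text{some draw}=y\}$ according to the first index $i$ at which a draw equals $y$, and then telescoping the conditional laws: conditioned on any history, the $j$-th draw obeys $P_j(< y)\le P^+(< y)$, $P_j(y)\le P^+(y)$ and $P_j(\le y)\le P^+(\le y)$ by the envelope bounds \eqref{eq:natural_inequality_for_P_y} and \eqref{eq:natural_inequality_for_P_le}. Each first-occurrence term is therefore at most $P^+(< y)^i\,P^+(y)\,P^+(\le y)^{k-1-i}$, and summing over $i=0,\dots,k-1$ collapses the geometric sum to $P^+(\le y)^k-P^+(< y)^k$ \emph{precisely} because of the additive identity $P^+(\le y)=P^+(< y)+P^+(y)$ from \eqref{eq:P+_natural_decomposition}. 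Averaging over $T$ produces the $f$-terms above; the $D'$ side is symmetric, using infima instead of suprema.

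Next I would convert the differences of $f$-values into derivatives. Since $f$ is a generating function, $f'$ is nonnegative and nondecreasing, so the mean value theorem gives $f(P^+(\le y))-f(P^+(< y))\le f'(P^+(\le y))\,P^+(y)$ and, in the opposite direction, $f(P^{\prime-}(\le y))-f(P^{\prime-}(< y))\ge f'(P^{\prime-}(< y))\,P^{\prime-}(y)$. Plugging these into $e^{(\alpha-1)\mathrm{D}_\alpha(A_1\|A_2)}=\sum_y\P[A_1=y]^\alpha\,\P[A_2=y]^{1-\alpha}$ factors the summand into $f'(P^+(\le y))^\alpha f'(P^{\prime-}(< y))^{1-\alpha}$ times $P^+(y)^\alpha P^{\prime-}(y)^{1-\alpha}$. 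Pulling the first factor out at its maximizing threshold $y^\ast$, and recognizing $q=P^+(\le y^\ast)=\mathbb{E}_{P^+}[g]$ and $q'=P^{\prime-}(< y^\ast)=\mathbb{E}_{P^{\prime-}}[g]$ for the threshold postprocessing $g=\mathbf{1}(\,\cdot\,\le y^\ast)$, what remains is exactly $\sum_y P^+(y)^\alpha P^{\prime-}(y)^{1-\alpha}=e^{(\alpha-1)\mathrm{D}_\alpha(P^+\|P^{\prime-})}$. Taking $\tfrac{1}{\alpha-1}\log$ and invoking Lemma~\ref{lem:rdf_bounded_divergence} to bound $\mathrm{D}_\alpha(P^+\|P^{\prime-})$ by $\tfrac{\alpha}{\alpha-1}\log\tfrac{C}{c}+\sup_\lambda \mathrm{D}_\alpha(M_\lambda\|M_\lambda')$ then yields the stated bound.

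I expect the adaptivity handled in the first two paragraphs to be the main obstacle. Because both CDF envelopes point the same way, the naive route of upper-bounding $\P[A_1\le y]$ and $\P[A_1< y]$ separately and subtracting fails, and one genuinely needs the first-occurrence decomposition together with \eqref{eq:P+_natural_decomposition} to make the telescoped adaptive bound coincide term-for-term with the idealized i.i.d.\ geometric sum. A lesser nuisance is the $\le$ versus $<$ mismatch between $q$ and $q'$ relative to the requirement that a single $g$ realize both; I would resolve this either by absorbing the single boundary atom or by a limiting threshold argument, neither of which affects the final constant.
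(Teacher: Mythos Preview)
Your overall architecture matches the paper's proof: dominate the adaptive per-step laws by the static envelopes $P^+$ and $P^{\prime-}$, use a first-occurrence decomposition to collapse the adaptive max into the i.i.d.-type expression $f(P^+(\le y))-f(P^+(<y))$ (and the matching lower bound on $D'$), then separate an $f'$-factor from the single-draw divergence handled by Lemma~\ref{lem:rdf_bounded_divergence}. Your treatment of the adaptivity via \eqref{eq:natural_inequality_for_P_y}--\eqref{eq:P+_natural_decomposition} is correct and is exactly what the paper does.

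The gap is in your mean-value step, and you dismiss it too quickly. Bounding $f(b)-f(a)$ above by $f'(b)(b-a)$ and below by $f'(a')(b'-a')$ leaves you with $q=P^+(\le y^\ast)$ and $q'=P^{\prime-}(<y^\ast)$, which are postprocessings by \emph{different} thresholds ($\mathbf 1(\cdot\le y^\ast)$ versus $\mathbf 1(\cdot<y^\ast)$). Neither of your proposed fixes works: taking $g=\mathbf 1(\cdot\le y^\ast)$ gives $q'=P^{\prime-}(\le y^\ast)\ge P^{\prime-}(<y^\ast)$, hence $f'(q')^{1-\alpha}\le f'(P^{\prime-}(<y^\ast))^{1-\alpha}$, so the lemma's right-hand side with this $g$ is \emph{smaller} than your bound, not larger; the other threshold fails symmetrically on the $q$ side. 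Since the single-$g$ structure is precisely what the proof of Theorem~\ref{thm:rdp_negbinomial} later exploits (data processing on the pair $(1-q,1-q')$), the mismatch is not cosmetic.

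The paper's resolution is to avoid the endpoint bounds altogether. It writes each difference as $(b-a)\,\mathbb{E}_{X\sim\mathrm{Unif}[a,b]}[f'(X)]$, uses convexity of $(u,v)\mapsto u^\alpha v^{1-\alpha}$ to push both expectations inside, and then \emph{couples} the two uniform variables through a common parameter $t\in[0,1]$. At the maximizing $(y_\ast,t_\ast)$ this yields a single $g$ taking values $1$, $t_\ast$, $0$ on $\{<y_\ast\}$, $\{=y_\ast\}$, $\{>y_\ast\}$, exactly as the lemma asserts. Your endpoint bounds correspond to the incompatible choices $t=1$ for the $P^+$ side and $t=0$ for the $P^{\prime-}$ side, which is why no single $g$ can realize them.
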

\begin{proof}[Proof of Lemma \ref{lem:crucial_lemma}]
We consider the event that $A_1$ outputs $y$. By definition, we have
    \begin{align*}
        A_1(y) &= \sum_{k = 1}^\infty \P(T = k) [\prod_{j = 1}^k P_j(\le y) - \prod_{i = 1}^k P_j(<y)] \\
        &=\sum_{k = 1}^\infty \P(T = k)[\sum_{i = 1}^k P_i(y) \prod_{j = 1}^{i -1}P_j(<y) \cdot\prod_{j = i +1 }^k P_j(\le y)]\\
        &\le \sum_{k = 1}^\infty \P(T = k)[\sum_{i = 1}^k P^+(y) \prod_{j = 1}^{i -1}P^+(<y) \cdot\prod_{j = i +1 }^k P^+(\le y)]\\
        &=\sum_{k = 1}^\infty \P(T = k)[\sum_{i = 1}^k  P^+(y)\cdot P^+(<y)^{i - 1} \cdot P^+(\le y)^{k - i}]\\
        & = \sum_{k = 1}^\infty \P(T = k) [ P^+(\le y)^k -  P^+(<y)^k] \\
        & = f(P^+(\le y)) - f(P^+(< y)) = P^+(y) \cdot \underset{{X\leftarrow\text{Uniform}([P^+(< y), P^+(\le y)])}}\E [f'(X)].
    \end{align*}
    The second equality is by partitioning on the events of the first time of getting $y$, we use $i$ to index such a time. The third inequality is using \eqref{eq:natural_inequality_for_P_y}, \eqref{eq:natural_inequality_for_P_le}, and \eqref{eq:natural_inequality_for_P_<}. The third to last equality is by  \eqref{eq:P+_natural_decomposition} and algebra. The second to last equality is by definition of the probability generating function $f$. The last equality follows from definition of integral.

    Similarly, we have
    \begin{align*}
        A_2(y) \ge \sum_{k = 1}^\infty \P(T = k) [ P^{\prime-}(\le y)^k -  P^{\prime-}(<y)^k]  = P^{\prime-}(y) \cdot \underset{{X\leftarrow\text{Uniform}([P^{\prime-}(< y), P^{\prime-}(\le y)])}}\E [f'(X)].
    \end{align*}
    
    The rest part of the proof is standard and follows similarly as in \cite{papernot2021hyperparameter}. Specifically, we have    
    \begin{align*}
&~~~e^{(\alpha-1) \mathrm{D}_\alpha\left(A_1 \| A_2\right)} \\
& =\sum_{y \in \mathcal{Y}} A_1(y)^\alpha \cdot A_2(y)^{1-\alpha} \\
& \le\sum_{y \in \mathcal{Y}} P^+(y)^\alpha \cdot P^{\prime-}(y)^{1-\alpha} \cdot \underset{X \leftarrow[P^+(< y), P^+(\le y)]}{\mathbb{E}}\left[f^{\prime}(X)\right]^\alpha \cdot \underset{{X^{\prime} \leftarrow\left[P^{\prime-}(< y), P^{\prime-}(\le y)\right]}}{\mathbb{E}}\left[f^{\prime}\left(X^{\prime}\right)\right]^{1-\alpha} \\
& \leq \sum_{y \in \mathcal{Y}} P^+(y)^\alpha \cdot P^{\prime-}(y)^{1-\alpha}
\cdot \underset{\substack{X \leftarrow[P^+(< y), P^+(\le y)] \\
X^{\prime} \leftarrow\left[P^{\prime-}(< y), P^{\prime-}(\le y)\right]}}{\mathbb{E}}\left[f^{\prime}(X)^\alpha \cdot f^{\prime}\left(X^{\prime}\right)^{1-\alpha}\right] \\
& \leq \left(\frac{C}{c}\right)^\alpha\sup_\lambda e^{(\alpha-1) \mathrm{D}_\alpha\left(M_\lambda(D) \| M_\lambda(D')\right)} \cdot \max _{y \in \mathcal{Y}} \underset{\substack{X \leftarrow[P^+(< y), P^+(\le y)] \\
X^{\prime} \leftarrow\left[P^{\prime-}(< y), P^{\prime-}(\le y)\right]}}{\mathbb{E}}\left[f^{\prime}(X)^\alpha \cdot f^{\prime}\left(X^{\prime}\right)^{1-\alpha}\right] .
\end{align*}
The last inequality follows from Lemma \ref{lem:rdf_bounded_divergence}. The second inequality follows from the fact that, for any $\alpha \in \mathbb{R}$, the function $h:(0, \infty)^2 \rightarrow(0, \infty)$ given by $h(u, v)=u^\alpha \cdot v^{1-\alpha}$ is convex. Therefore, $\mathbb{E}[U]^\alpha \mathbb{E}[V]^{1-\alpha}=h(\mathbb{E}[(U, V)]) \leq \mathbb{E}[h(U, V)]=\mathbb{E}\left[U^\alpha \cdot V^{1-\alpha}\right]$ all positive random variables $(U, V)$. Note that $X$ and $X^{\prime}$ are required to be uniform separately, but their joint distribution can be arbitrary. As in \cite{papernot2021hyperparameter}, we will couple them so that $\frac{X-P^+(< y)}{P^+(y)}=\frac{X^{\prime}-P^{\prime-}(< y)}{P^{\prime-}(y)}$. In particular, this implies that, for each $y \in \mathcal{Y}$, there exists some $t \in[0,1]$ such that
$$
\underset{\substack{X \leftarrow[P^+(< y), P^+(\le y)] \\
X^{\prime} \leftarrow\left[P^{\prime-}(< y), P^{\prime-}(\le y)\right]}}{\mathbb{E}}\left[f^{\prime}(X)^\alpha \cdot f^{\prime}\left(X^{\prime}\right)^{1-\alpha}\right] \leq f^{\prime}(P^+(< y)+t \cdot P^+(y))^\alpha \cdot f^{\prime}\left(P^{\prime-}(< y)+t \cdot P^{\prime-}( y)\right)^{1-\alpha}
$$
Therefore, we have
\begin{align*}
    \mathrm{D}_\alpha\left(A_1 \| A_2\right) \leq&  \sup_\lambda \mathrm{D}_\alpha\left(M_\lambda \| M_\lambda^{\prime}\right)+\frac{\alpha}{\alpha - 1}\log\frac{C}{c} \\
    &+\frac{1}{\alpha-1} \log \left(\max _{\substack{y \in \mathcal{Y} \\ t \in[0,1]}} f^{\prime}(P^+(< y)+t \cdot P^+(y))^\alpha \cdot f^{\prime}\left(P^{\prime-}(< y)+t \cdot P^{\prime-}( y)\right)^{1-\alpha}\right) .
\end{align*}
To prove the result, we simply fix $y_* \in \mathcal{Y}$ and $t_* \in[0,1]$ achieving the maximum above and define
$$
g(y):=\left\{\begin{array}{cl}
1 & \text { if } y<y_* \\
t_* & \text { if } y=y_* \\
0 & \text { if } y>y_*
\end{array}\right.
$$
The result directly follows by setting $q=\underset{X \leftarrow P^+}{\mathbb{E}}[g(X)]$ and $q^{\prime}=\underset{X^{\prime} \leftarrow P^{\prime-}}{\mathbb{E}}\left[g\left(X^{\prime}\right)\right]$.
\end{proof}

Now we can prove \Cref{thm:rdp_negbinomial}, given the previous technical lemma. The proof share similarity to the proof of Theorem 2 in \cite{papernot2021hyperparameter} with the key difference from the different form in Lemma \ref{lem:crucial_lemma}. We demonstrate this proof as follows for completeness.
\begin{proof}[Proof of \Cref{thm:rdp_negbinomial}]
    We first specify the probability generating function of the truncated negative binomial distribution
    $$
    f(x)=\underset{T \sim \text{NegBin}(\theta, \gamma)}{\mathbb{E}}\left[x^T\right]= \begin{cases}\frac{(1-(1-\gamma) x)^{-\theta}-1}{\gamma^{-\theta}-1} & \text { if } \theta \neq 0 \\ \frac{\log (1-1-\gamma) x)}{\log (\gamma)} & \text { if } \theta=0\end{cases}$$

    Therefore,
$$
\begin{aligned}
f^{\prime}(x) & =(1-(1-\gamma) x)^{-\theta-1} \cdot \begin{cases}\frac{\theta \cdot(1-\gamma)}{\gamma-\theta-1} & \text { if } \theta \neq 0 \\
\frac{1-\gamma}{\log (1 / \gamma)} & \text { if } \theta=0\end{cases} \\
& =(1-(1-\gamma) x)^{-\theta-1} \cdot \gamma^{\theta+1} \cdot \mathbb{E}[T] .
\end{aligned}
$$
By Lemma \ref{lem:crucial_lemma}, for appropriate values $q, q^{\prime} \in[0,1]$ and for all $\alpha>1$ and all $\hat{\alpha}>1,$ we have

\begin{align*}
& \mathrm{D}_\alpha\left(A_1 \| A_2\right) \\
& \leq \sup_\lambda \mathrm{D}_\alpha\left(M_\lambda \| M_\lambda^{\prime}\right)+\frac{\alpha}{\alpha - 1}\log\frac{C}{c}+\frac{1}{\alpha-1}  \log \left(f^{\prime}(q)^\alpha \cdot f^{\prime}\left(q^{\prime}\right)^{1-\alpha}\right) \\
& \le \epsilon + \frac{\alpha}{\alpha - 1}\log\frac{C}{c}+\frac{1}{\alpha-1} \log \left(\gamma^{\theta+1} \cdot \mathbb{E}[T] \cdot(1-(1-\gamma) q)^{-\alpha(\theta+1)} \cdot\left(1-(1-\gamma) q^{\prime}\right)^{-(1-\alpha)(\theta+1)}\right) \\
& =\epsilon + \frac{\alpha}{\alpha - 1}\log\frac{C}{c}\\&~~~+\frac{1}{\alpha-1} \log \left(\gamma^{\theta+1} \cdot \mathbb{E}[T] \cdot\left((\gamma+(1-\gamma)(1-q))^{1-\hat{\alpha}} \cdot\left(\gamma+(1-\gamma)\left(1-q^{\prime}\right)\right)^{\hat{\alpha}}\right)^\nu \cdot(\gamma+(1-\gamma)(1-q))^u\right) \\
& ~~~~\text{(Here, we let $\hat{\alpha} \nu=(\alpha-1)(1+\theta)$  and $(1-\hat{\alpha}) \nu+u=-\alpha(\theta+1)$)} \\
& \leq \epsilon + \frac{\alpha}{\alpha - 1}\log\frac{C}{c}+\frac{1}{\alpha-1} \log \left(\gamma^{\theta+1} \cdot \mathbb{E}[T] \cdot \left(\gamma+(1-\gamma) \cdot e^{(\hat{\alpha}-1)\mathrm{D}_{\hat{\alpha}}\left(P^+ \| P^-\right) }\right)^\nu \cdot(\gamma+(1-\gamma)(1-q))^u\right) \\
& \text { (Here,} 1-q \text { and } 1-q^{\prime} \text { are postprocessings of some } P^+ \text { and } P^{\prime-} \text { respectively and } e^{(\hat{\alpha}-1) \mathrm{D}_{\hat{\alpha}}(\cdot \| \cdot)} \text { is convex })\\
&\leq \epsilon + \frac{\alpha}{\alpha - 1}\log\frac{C}{c}+\frac{1}{\alpha-1} \log \left(\gamma^{\theta+1} \cdot \mathbb{E}[T] \cdot \left(\gamma+(1-\gamma) \cdot e^{(\hat{\alpha}-1)\sup_\lambda \mathrm{D}_{\hat{\alpha}}\left(M_\lambda \| M_\lambda^{\prime}\right) + \hat{\alpha}\log\frac{C}{c}}\right)^\nu \cdot(\gamma+(1-\gamma)(1-q))^u\right) \\
& \text { (By Lemma \ref{lem:rdf_bounded_divergence})}\\
& \leq \epsilon + \frac{\alpha}{\alpha - 1}\log\frac{C}{c}+\frac{1}{\alpha-1} \log \left(\gamma^{\theta+1} \cdot \mathbb{E}[T] \cdot\left(\gamma+(1-\gamma) \cdot e^{(\hat{\alpha}-1)\sup_\lambda \mathrm{D}_{\hat{\alpha}}\left(M_\lambda \| M_\lambda^{\prime}\right) + \hat{\alpha}\log\frac{C}{c}}\right)^\nu \cdot(\gamma+(1-\gamma)(1-q))^u\right) \\
& \leq \epsilon + \frac{\alpha}{\alpha - 1}\log\frac{C}{c}+\frac{1}{\alpha-1} \log \left(\gamma^{\theta+1} \cdot \mathbb{E}[T] \cdot\left(\gamma+(1-\gamma) \cdot e^{(\hat{\alpha}-1)\sup_\lambda \mathrm{D}_{\hat{\alpha}}\left(M_\lambda \| M_\lambda^{\prime}\right) + \hat{\alpha}\log\frac{C}{c}}\right)^\nu \cdot \gamma^u\right) \\
& ~~~~(\text{Here }\gamma \leq \gamma+(1-\gamma)(1-q) \text { and } u \leq 0) \\
& =\epsilon + \frac{\alpha}{\alpha - 1}\log\frac{C}{c}+\frac{\nu}{\alpha-1} \log \left(\gamma+(1-\gamma) \cdot e^{(\hat{\alpha}-1)\sup_\lambda \mathrm{D}_{\hat{\alpha}}\left(M_\lambda \| M_\lambda^{\prime}\right) + \hat{\alpha}\log\frac{C}{c}}\right)+\frac{1}{\alpha-1} \log \left(\gamma^{\theta+1} \cdot \mathbb{E}[T] \cdot \gamma^u\right) \\
& =\epsilon + \frac{\alpha}{\alpha - 1}\log\frac{C}{c}+\frac{\nu}{\alpha-1}\left((\hat{\alpha}-1)\sup_\lambda \mathrm{D}_{\hat{\alpha}}\left(M_\lambda \| M_\lambda^{\prime}\right) + \hat{\alpha}\log\frac{C}{c}\right.\\
&~~~\left.+\log \left(1-\gamma \cdot\left(1-e^{-(\hat{\alpha}-1)\sup_\lambda \mathrm{D}_{\hat{\alpha}}\left(M_\lambda \| M_\lambda^{\prime}\right) + \hat{\alpha}\log\frac{C}{c}}\right)\right)\right) 
 +\frac{1}{\alpha-1} \log \left(\gamma^{u+\theta+1} \cdot \mathbb{E}[T]\right) \\
& =\epsilon + \frac{\alpha}{\alpha - 1}\log\frac{C}{c}+(1+\theta)\left(1-\frac{1}{\hat{\alpha}}\right) \sup_\lambda \mathrm{D}_{\hat{\alpha}}\left(M_\lambda \| M_\lambda^{\prime}\right) + (1+\theta)\log\frac{C}{c}\\
&~~~+\frac{1+\theta}{\hat{\alpha}} \log \left(1-\gamma \cdot\left(1-e^{-(\hat{\alpha}-1)\sup_\lambda \mathrm{D}_{\hat{\alpha}}\left(M_\lambda \| M_\lambda^{\prime}\right) + \hat{\alpha}\log\frac{C}{c}}\right)\right) +\frac{\log (\mathbb{E}[T])}{\alpha-1}+\frac{1+\theta}{\hat{\alpha}} \log (1 / \gamma) \\
&~~~~~\text{(Here we have } \nu=\frac{(\alpha-1)(1+\theta)}{\hat{\alpha}} \text { and } u=-(1+\theta)\left(\frac{\alpha-1}{\hat{\alpha}}+1\right)) \\
& =\epsilon + \frac{\alpha}{\alpha - 1}\log\frac{C}{c}+(1+\theta)\left(1-\frac{1}{\hat{\alpha}}\right) \sup_\lambda \mathrm{D}_{\hat{\alpha}}\left(M_\lambda \| M_\lambda^{\prime}\right) + (1+\theta)\log\frac{C}{c}\\
&~~~+\frac{1+\theta}{\hat{\alpha}} \log \left(\frac{1}{\gamma}-1+e^{-(\hat{\alpha}-1)\sup_\lambda \mathrm{D}_{\hat{\alpha}}\left(M_\lambda \| M_\lambda^{\prime}\right) - \hat{\alpha}\log\frac{C}{c}}\right)+\frac{\log (\mathbb{E}[T])}{\alpha-1} \\
& \leq \epsilon + \frac{\alpha}{\alpha - 1}\log\frac{C}{c}+(1+\theta)\left(1-\frac{1}{\hat{\alpha}}\right) \hat{\varepsilon} + (1+\theta)\log\frac{C}{c}+\frac{1+\theta}{\hat{\alpha}} \log \left(\frac{1}{\gamma}\right)+\frac{\log (\mathbb{E}[T])}{\alpha-1},
\end{align*}

which completes the proof.
\end{proof}

\section{Truncated Negative Binomial Distribution}
\label{sec:negative-binomial-distribution}
We introduce the definition of truncated negative binomial distribution \cite{papernot2021hyperparameter} in this section.
\begin{definition}
    (Truncated Negative Binomial Distribution~\cite{papernot2021hyperparameter}). Let $\gamma \in(0,1)$ and $\theta \in(-1, \infty)$. Define a distribution $\text{NegBin}(\theta, \gamma)$ on $\mathbb{N}^+$ as follows:
    \begin{itemize}
        \item If $\theta \neq 0$ and $T$ is drawn from $\text{NegBin}(\theta, \gamma)$, then
$$
\forall k \in \mathbb{N} \quad \mathbb{P}[T=k]=\frac{(1-\gamma)^k}{\gamma^{-\theta}-1} \cdot \prod_{\ell=0}^{k-1}\left(\frac{\ell+\theta}{\ell+1}\right)
$$
and $\mathbb{E}[T]=\frac{\theta \cdot(1-\gamma)}{\gamma \cdot\left(1-\gamma^\theta\right)}$.  Note that when $\theta = 1$, it reduces to the geometric distribution with parameter $\gamma$.
\item If $\theta = 0$ and $T$ is drawn from $\text{NegBin}({0, \gamma})$, then
$$
\mathbb{P}[T=k]=\frac{(1-\gamma)^k}{k \cdot \log (1 / \gamma)}
$$
and $\mathbb{E}[T]=\frac{1 / \gamma-1}{\log (1 / \gamma)}$.
    \end{itemize}
\end{definition}

\section{Privatization of Sampling Distribution}
\label{app:information projection}
\subsection{General Functional Projection Framework}
In section \ref{sec:functional_projection}, we define the projection onto a convex set $S_{C, c}$ as an optimization in terms of $\ell_2$ loss.
More generally, we can perform the following general projection at the $j$-th iteration by considering an additional penalty term, with a constant $\nu$:
\begin{align}
    \min_f &~~\|f - {\pi}^{(j)}\|_2 + \nu KL({\pi^{(j)}}, f)\label{eq:penal_functional_projection_problem}\\
    \text{s.t.}&~~ f\in S_{C, c}.\notag
\end{align}
When $\nu = 0$, we recover the original $\ell_2$ projection. 
Moreover, it's worth noting that our formulation has implications for the information projection literature \cite{csiszar2003information, khanna2017information}. Specifically, as the penalty term parameter $\nu$ approaches infinity, the optimization problem evolves into a minimization of KL divergence, recovering the objective function of information projection (in this instance, moment projection). However, the constraint sets in the literature of information projection are generally much simpler than our set $S_{C, c}$, making it infeasible to directly borrow methods from its field. To the best of our knowledge, our framework is the first to address this specific problem in functional projection and establish a connection to information projection in the DP community.

\subsection{Practical Implementation of Functional Projection}
Optimization program \eqref{eq:functional_projection_problem} is essentially a functional programming since $f$ is a function on $\Lambda$. However, when $\Lambda$ represents a non-discrete parameter space, such functional minimization is typically difficult to solve analytically.  Even within the literature of information projection, none of the methods considers our constraint set $S_{C, c}$, which can be viewed as the intersections of uncountable single-point constraints on $f$. To obtain a feasible solution to the optimization problem, we leverage the idea of discretization. Instead of viewing \eqref{eq:functional_projection_problem} as a functional projection problem, we manually discretize $\Lambda$ and solve 
\eqref{eq:functional_projection_problem} as a minimization problem over a discrete set. Note that such approximation is unavoidable in numerical computations since computers can only manage discrete functions, even when we solve the functional projection analytically. Moreover, we also have the freedom of choosing the discretization grid without incurring extra privacy loss since the privacy cost is independent of the size of parameter space. By converting $S_{C, c}$ into a set of finite constraints, we are able to solve the discrete optimization problem efficiently using CVXOPT \cite{andersen2013cvxopt}.

\section{DP-HyPO with General Prior Distribution}
\label{app:dphypo-general-prior}
In the main manuscript, we assume $\pi^{(0)}$ follows a uniform distribution over the parameter space $\Lambda$ for simplicity. In practice, informed priors can be used when we want to integrate knowledge about the parameter space into sampling distribution, which is common in the Bayesian optimization framework. We now present the general DP-HyPO framework under the informed prior distribution. 

To begin with,
we define the space of essentially bounded density functions with respect to $\pi^{(0)}$ as $$S_{C, c}(\pi^{(0)}) = \{f\in \Lambda^{\mathbb{R}^+}:\text{ess sup} ~f/\pi^{(0)} \le C, \text{ess inf} ~f/\pi^{(0)} \ge c, \int_{\alpha\in \Lambda}f(\alpha) \mathrm{d}\alpha = 1, f \ll \pi^{(0)}\}.$$ 
When $\pi^{(0)} = \frac{1}{\mu(\lambda)}$, we recover the original definition of $S_{C, c}$.
Note that here $f \ll \pi^{(0)}$ means that $f$ is absolute continuous with respect to the prior distribution $\pi^{(0)}$ and this ensures that $S_{C, c}(\pi^{(0)})$ is non-empty. Note that such condition is automatically satisfied when $\pi^{(0)}$ is the uniform prior over the entire parameter space.

To define the projection of a density at the $j$-th iteration, $\pi^{(j)}$, into the space $S_{C, c}(\pi^{(0)})$, we consider the following functional programming problem:
\begin{align*}
    \min_f &~~\|f - {\pi^{(j)}}\|_2\\
    \text{s.t.}&~~ f\in S_{C, c}(\pi^{(0)}),
\end{align*}
which is a direct generalization of \Cref{eq:functional_projection_problem}. As before, $S_{C, c}(\pi^{(0)})$ is also convex and closed and the optimization program can be solved efficiently via discretization on $\Lambda$. 

\section{Experiment Details}
\label{app:simulations}
\subsection{MNIST Simulation}
\label{app:mnist}

We now provide the detailed description of the experiment in \Cref{sec:mnist}. As specified therein, we consider two variable hyperparameters: the learning rate $\eta$ and clipping norm $R$, while keeping all the other hyperparameters fixed. We set the training batch size to be $256$, and the total number of epoch to be $10$. The value of $\sigma$ is determined based on the allocated $\varepsilon$ budget for each base algorithm. Specifically, $\sigma = 0.71$ for GP and $\sigma = 0.64$ for Uniform. For demonstration purposes, we set $C$ to 2 and $c$ to 0.75 in the GP method, so each base algorithm of Uniform has $\log C/c$ more privacy budget than base algorithms in GP method.  In Algorithm \ref{alg:gp_adapt}, we set $\tau$ to 0.1 and $\beta$ to 1. To facilitate the implementation of both methods, we discretize the learning rates and clipping norms as specified in the following setting to allow simple implementation of sampling and projection for Uniform and GP methods.
\begin{setting}\label{setting:320}
    we set a log-spaced grid discretization on $\eta$ in the range $[0.0001, 10]$ with a multiplicative factor of $\sqrt[3]{10}$, resulting in $16$ observations for $\eta$. We also set a linear-spaced grid discretization on $R$ in the range $[0.3, 6]$ with an increment of $0.3$, resulting in $20$ observations for $R$. This gives a total of $320$ hyperparameters over the search region.
\end{setting}
We specify the network structure we used in the simulation as below. It is the standard CNN in \href{https://github.com/tensorflow/privacy/blob/master/tutorials/walkthrough/mnist_scratch.py}{\texttt{Tensorflow Privacy}} and \href{https://github.com/pytorch/opacus/blob/master/examples/mnist.py}{\texttt{Opacus}}.
\begin{verbatim}
class ConvNet(nn.Module):
    def __init__(self):
        super().__init__()
        self.conv1 = nn.Conv2d(1, 16, 8, 2, padding=3)
        self.conv2 = nn.Conv2d(16, 32, 4, 2)
        self.fc1 = nn.Linear(32 * 4 * 4, 32)
        self.fc2 = nn.Linear(32, 10)

    def forward(self, x):
        x = F.relu(self.conv1(x))  
        x = F.max_pool2d(x, 2, 1)  
        x = F.relu(self.conv2(x))  
        x = F.max_pool2d(x, 2, 1)  
        x = x.view(-1, 32 * 4 * 4)  
        x = F.relu(self.fc1(x))  
        x = self.fc2(x)  
        return x
\end{verbatim}

Despite the simple nature of MNIST, the simulation of training CNN with the two methods over each different fixed $T$ still take significant computation resources. Due to the constraints on computational resources, we conduct a semi-real simulation using the MNIST dataset. We cache the mean accuracy of $5$ independently trained models for each discretized hyperparameter and treat that as a proxy for the ``actual accuracy'' of the hyperparameter. Each time we sample the accuracy of a hyperparameter, we add Gaussian noise with a standard deviation of $0.1$ to the cached mean. We evaluate the performance of the output model based on the ``actual accuracy'' corresponding to the selected hyperparameter.

\subsection{CIFAR-10 Simulation}
\label{app:cifar}

We also provide a description of the experiment in \Cref{sec:cifar}. We set the training batch size to be $256$, and the total number of epoch to be $10$. The value of $\sigma$ is determined based on the allocated $\varepsilon$ budget for each base algorithm. Specifically, $\sigma = 0.65$ for GP and $\sigma = 0.6$ for Uniform. Regarding our GP method, we adopt the same set of hyperparameters as used in our MNIST experiments, which include $C = 2$, $c=0.75$, $\tau=0.1$, and $\beta=1$. As usual, we discretize the learning rates and clipping norms as specified in the following Setting.
\begin{setting}\label{setting:2500}
    we set a log-spaced grid discretization on $\eta$ in the range $[0.0001, 1]$ with a multiplicative factor of $10^{0.1}$, resulting in $50$ observations for $\eta$. We also set a linear-spaced grid discretization on $R$ in the range $[0, 100]$ with an increment of $2$, resulting in $50$ observations for $R$. This gives a total of $2500$ hyperparameter combinations over the search region.
\end{setting}
We follow the same CNN model architecture with our MNIST experiments.

In Figure~\ref{fig:gp_loss_mean_std_plot}, we provide the hyperparameter landscape for $\sigma = 0.65$, as generated by BoTorch~\cite{balandat2020botorch}.
\begin{figure}[hbt]
    \centering
    \includegraphics[width=0.48\linewidth]{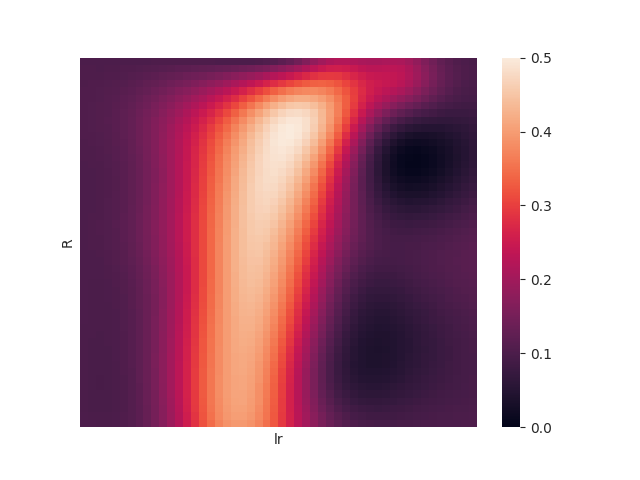}\includegraphics[width=0.48\linewidth]{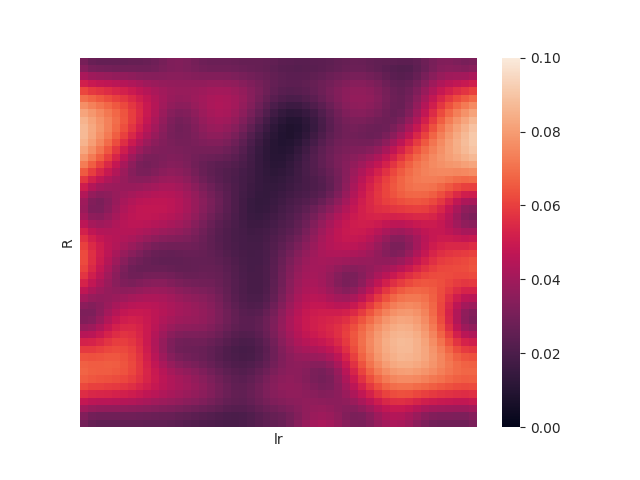}
    \caption{Mean and standard error of the accuracy of DP-SGD over the two hyperparameters for $\sigma = 0.65$. The learning rate (log-scale) ranges from $0.00001$ (left) to 1 (right) while the clipping norm ranges from 0 (top) to 100 (bottom). The landscape for $\sigma = 0.6$ is similar, with a better accuracy.}
\label{fig:gp_loss_mean_std_plot}
\end{figure}

\subsection{Federated Learning Simulation}
\label{app:fl}

\begin{figure}[hbt]
    \centering
    \includegraphics[width=0.32\linewidth]{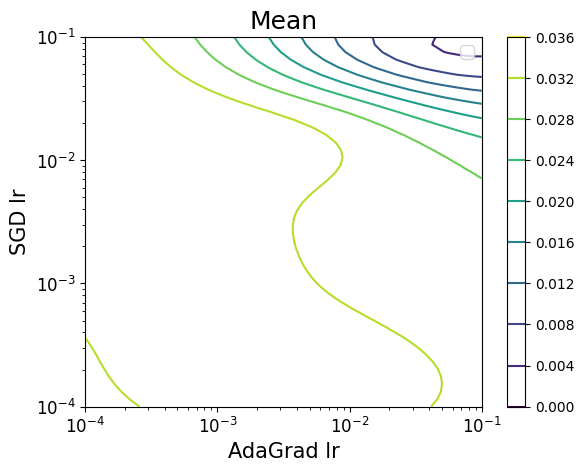}\includegraphics[width=0.32\linewidth]{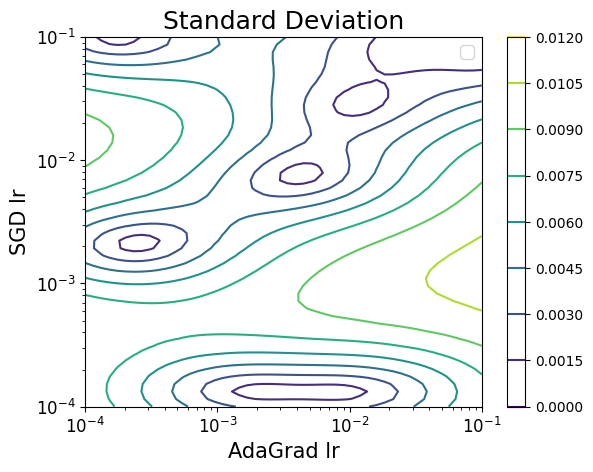}
    \caption{Mean and Standard Error of the loss of the FL over the two hyperparameters.}
\label{fig:Meta_loss_mean_std_plot}
\end{figure}

We now provide the detailed description of the experiment in Section \ref{sec:fedlearning}. As specified therein, we considered a FL task on a proprietary dataset\footnote{We are unable to report a lot of detail about the proprietary dataset due to confidentiality.}. Our objective is to determine the optimal learning rates for the central server (using AdaGrad) and the individual users (using SGD). To simulate this scenario, we utilize the landscape generated by BoTorch~\cite{balandat2020botorch}, as illustrated in Figure~\ref{fig:Meta_loss_mean_std_plot}, and consider it as our reference landscape for both mean and standard deviation of the loss for each hyperparameter. When the algorithm (GP or Uniform) visits a specific hyperparameter $\lambda$, our oracle returns a noisy score $q(\lambda)$ drawn from a normal distribution $N(\mu_\lambda, \sigma_\lambda)$.
Figure~\ref{fig:Meta_loss_mean_std_plot} displays a heatmap that presents the mean ($\mu_\lambda$) and standard error ($\sigma_\lambda$) structure of the loss over these two hyperparameters, providing insights into the landscape's characteristics.


\section{Additional Related Work}
\label{app:arw}
In this section, we delve into a more detailed review of the pertinent literature.

We begin with non-private Hyperparameter Optimization, a critical topic in the realm of Automated Machine Learning (AutoML)~\cite{he2021automl}. The fundamental inquiry revolves around the generation of high-performing models within a specific search space. In historical context, two types of optimizations have proven significant in addressing this inquiry: architecture optimization and hyperparameter optimization. Architecture optimization pertains to model-specific parameters such as the number of neural network layers and their interconnectivity, while hyperparameter optimization concerns training-specific parameters, including the learning rate and minibatch size. In our paper, we incorporate both types of optimizations within our HPO framework. Practically speaking, $\Lambda$ can encompass various learning rates and network architectures for selection. For HPO, elementary methods include grid search and random search~\cite{li2020random, hundt2019sharpdarts, geifman2019deep}. Progressing beyond non-adaptive random approaches, surrogate model-based optimization presents an adaptive method, leveraging information from preceding results to construct a surrogate model of the objective function~\cite{mendoza2016towards, zela2018towards, kandasamy2018neural, negrinho2019towards}. These methods predominantly employs Bayesian optimization techniques, including Gaussian process~\cite{rasmussen2004gaussian}, Random Forest~\cite{hutter2011sequential}, and tree-structured Parzen estimator~\cite{bergstra2011algorithms}.

Another important topic in this paper is Differential Privacy (DP). DP offers a mathematically robust framework for measuring privacy leakage. A DP algorithm promises that an adversary with perfect information about the entire private dataset in use -- except for a single individual -- would find it hard to distinguish between its presence or absence based on the output of the algorithm~\cite{dwork2006calibrating}. 

Historically, DP machine learning research has overlooked the privacy cost associated with HPO~\cite{abadi2016deep, yu2021large, zhang2021wide}. The focus has only recently shifted to the ``honest HPO'' setting, where this cost is factored in~\cite{mohapatra2022role}. Addressing this issue directly involves employing a composition-based analysis. If each training run of a hyperparameter upholds DP, then the overall HPO procedure adheres to DP through composition across all attempted hyperparameter values. A plethora of literature on the composition of DP mechanisms attempts to quantify a better DP guarantee of the composition. Vadhan et al.~\cite{vadhan2017complexity} demonstrated that though $(\epsilon, \delta)$-DP possesses a simple mathematical form, deriving the precise privacy parameters of a composition is \#-P hard. Despite this obstacle, numerous advanced techniques are available to calculate a reasonably accurate approximation of the privacy parameters, such as Moments Accountant~\cite{abadi2016deep}, GDP Accountant~\cite{dong2022gaussian}, and Edgeworth Accountant~\cite{wang2022analytical}. The efficacy of these accountants is attributed to the fact that it is easier to reason about the privacy guarantees of compositions within the framework of R\'enyi differential privacy~\cite{mironov2017renyi} or $f$-differential privacy~\cite{dong2022gaussian}. These methods have found widespread application in DP machine learning. For instance, when training deep learning models, one of the most commonly adopted methods to ensure DP is via noisy stochastic gradient descent (noisy SGD) \cite{bassily2014private, song2013stochastic}, which uses Moments Accountant to better quantify the privacy guarantee. 

Although using composition for HPO is a simple and straightforward approach,  it carries with it a significant challenge. The privacy guarantee derived from composition accounting can be excessively loose, scaling polynomially with the number of runs. Chaudhuri et al.~\cite{chaudhuri2013stability} were the first to enhance the DP bounds for HPO by introducing additional stability assumptions on the learning algorithms.~\cite{liu2019private} made significant progress in enhancing DP bounds for HPO without relying on any stability properties of the learning algorithms. They proposed a simple procedure where a hyperparameter was randomly selected from a uniform distribution for each training run. This selection process was repeated a random number of times according to a geometric distribution, and the best model obtained from these runs was outputted. 
They showed that this procedure satisfied $(3\varepsilon, 0)$-DP as long as each training run of a hyperparameter was $(\varepsilon, 0)$-DP.
Building upon this,~\cite{papernot2021hyperparameter} extended the procedure to accommodate negative binomial or Poisson distributions for the repeated uniform selection. They also offered more precise R\'enyi DP guarantees for this extended procedure.
Furthermore,~\cite{cohen2022generalized} explored a generalization of the procedure for top-$k$ selection, considering ($\epsilon, \delta$)-DP guarantees.

In a related context,~\cite{mohapatra2022role} explored a setting that appeared superficially similar to ours, as their title mentioned ``adaptivity.'' However, their primary focus was on improving adaptive optimizers such as DP-Adam, which aimed to reduce the necessity of hyperparameter tuning, rather than the adaptive HPO discussed in this paper. Notably, in terms of privacy accounting, their approach only involved composing the privacy cost of each run without proposing any new method.

Another relevant area of research is DP selection, which encompasses well-known methods such as the exponential mechanism~\cite{mcsherry2007mechanism} and the sparse vector technique \cite{dwork2009complexity}, along with subsequent studies (e.g.,~\cite{bun2019private} and~\cite{gopi2020locally}). However, this line of research always assumes the existence of a low-sensitivity score function for each candidate, which is an unrealistic assumption for hyperparameter optimization.

\end{document}